\theoremstyle{plain}
\newtheorem{theorem}{Theorem}[section]
\newtheorem{lemma}[theorem]{Lemma}
\theoremstyle{definition}
\newtheorem{definition}[theorem]{Definition}
\theoremstyle{remark}
\newtheorem{remark}[theorem]{Remark}
\newcommand{\method}{DAF\xspace}
\icmltitlerunning{Submission and Formatting Instructions for ICML 2026}
\begin{document}

\twocolumn[
  \icmltitle{Dynamical Adapter Fusion: Constructing A Global Adapter for Pre-Trained Model-based Class-Incremental Learning}




  \begin{icmlauthorlist}
    \icmlauthor{Ruiqi Liu}{yyy,comp}
    \icmlauthor{Boyu Diao}{yyy,comp}
    \icmlauthor{Zijia An}{yyy,comp}
    \icmlauthor{Zhulin An}{yyy,comp}
    \icmlauthor{Fei Wang}{yyy,comp}
    \icmlauthor{Yongjun Xu}{yyy,comp}
  \end{icmlauthorlist}

  \icmlaffiliation{yyy}{Institute of Computing Technology, Chinese Academy of Sciences, Beijing, China}
  \icmlaffiliation{comp}{University of Chinese Academy of Sciences, Beijing, China}

  \icmlcorrespondingauthor{ Boyu Diao}{diaoboyu2012@ict.ac.cn}

  \icmlkeywords{Machine Learning, ICML}

  \vskip 0.3in
]



\printAffiliationsAndNotice{}  

\begin{abstract}
Class-Incremental Learning (CIL) requires models to continuously acquire new classes without forgetting previously learned ones. A dominant paradigm involves freezing a pre-trained model and training lightweight, task-specific adapters. However, maintaining task-specific parameters hinders knowledge transfer and incurs high retrieval costs, while naive parameter fusion often leads to destructive interference and catastrophic forgetting. To address these challenges, we propose Dynamical Adapter Fusion (DAF) to construct a single robust global adapter. Grounded in the PAC-Bayes theorem, we derive a fusion mechanism that explicitly integrates three components: the optimized task-specific adapter parameters, the previous global adapter parameters, and the initialization parameters. We utilize the Taylor expansion of the loss function to derive the optimal fusion coefficients, dynamically achieving the best balance between stability and plasticity. Furthermore, we propose a Robust Initialization strategy to effectively capture global knowledge patterns. Experiments on multiple CIL benchmarks demonstrate that DAF achieves state-of-the-art (SOTA) performance.
\end{abstract}
\section{Introduction}
\label{sec:intro}
\begin{figure}[t]
    \centering
    \includegraphics[width=0.38\textwidth]{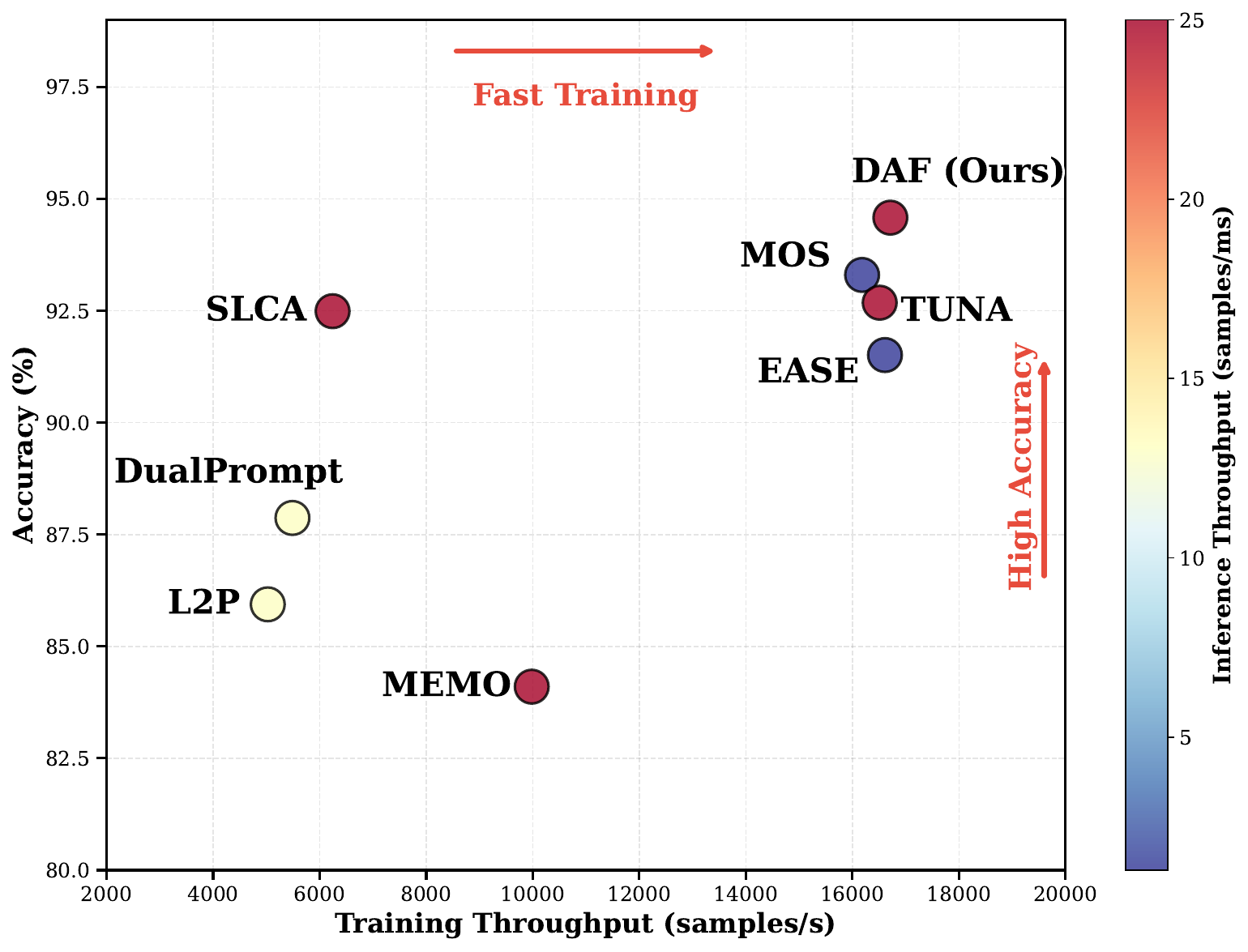}
    \caption{
        Efficiency and performance evaluation on CIFAR-100.
        We plot Average Accuracy ($\bar{\mathcal{A}}$) against Training Throughput (samples/s). 
        The color of each point represents the Inference Throughput (samples/ms). 
        The ideal method occupies the top-right corner with a dark red color (highest inference throughput).
        TUNA~\cite{wang2025integrating} is evaluated here using its single global adapter.
        Our DAF achieves this optimal balance, significantly outperforming complex methods in both speed and precision.
    }
    \label{fig:teaser}
\end{figure}
The objective of Class-Incremental Learning is to sequentially assimilate novel categories while preventing the erasure of historical knowledge, a phenomenon known as catastrophic forgetting~\cite{kirkpatrick2017overcoming,mccloskey1989catastrophic}. Beyond the primary struggle to mitigate forgetting, the widespread application of CIL in resource-limited environments, including embodied AI and edge computing, has made computational efficiency an equally urgent research imperative~\cite{soltoggio2024collective, liu2024resource}. Generally, prevailing methods can be bisected into two streams: training from scratch and exploiting Pre-trained Models (PTMs). Traditional from-scratch methods, which rely on regularization, dynamic architectures, or data replay~\cite{li2017learning, serra2018overcoming, rebuffi2017learning}, frequently face intrinsic bottlenecks regarding data privacy, scalability, and performance saturation~\cite{wang2022dualprompt}. Accordingly, the field has witnessed a paradigm shift toward PTM-based frameworks~\cite{zhou2024revisiting}. By capitalizing on the robust, transferable features acquired from extensive pre-training, these methods consistently yield SOTA results. Within this domain, the dominant strategy involves locking the PTM backbone and exclusively employing Parameter-Efficient Fine-Tuning (PEFT) techniques, such as Prompt Tuning~\cite{wang2022learning, smith2023coda} or Adapter Tuning~\cite{sun2025mos}. This paradigm has become the standard for modern CIL, offering an optimal compromise between minimal resource consumption and superior classification accuracy.

A common strategy trains separate adapters for each task~\cite{zhou2024expandable}, requiring retrieval or ensembling during inference~\cite{smith2023coda,zhou2024expandable}. While this reduces task interference, it incurs high retrieval costs and risks performance degradation from incorrect selection. Moreover, memory usage grows with the number of tasks. Consequently, we aim to construct a single global adapter without storing historical modules. However, existing global methods often retain this storage overhead~\cite{wang2025integrating, huai2025cl}, while simple fusion techniques like Exponential Moving Average (EMA) fail to capture sufficient global knowledge~\cite{qiaolarge}.

To this end, we propose Dynamical Adapter Fusion, a novel framework designed to construct a single robust global adapter for PTM-based CIL. We adopt the PAC-Bayes theorem to provide insight into the dilemma between effective learning and retaining previous knowledge. Our theoretical analysis reveals that the upper bound of the expected risk can be divided into three distinct parts: empirical risk, generalization regularization, and forgetting regularization. Guided by this analysis, upon the completion of each task, we fuse three components: the optimized task-specific adapter parameters, the previous global adapter parameters, and the initialization parameters of the current task-specific adapter. We utilize the Taylor expansion of the loss function to establish an ideal optimization objective for global adapter fusion that explicitly accounts for these three terms. By employing the method of Lagrange Multipliers to solve this objective, we derive the optimal fusion coefficients to adaptively achieve the best balance between stability and plasticity. Furthermore, we propose a Robust Initialization strategy that recursively averages historical adapters as an informative prior to capture global knowledge patterns and enhance robustness. Consequently, our entire framework operates without the need for storing historical adapters, resulting in the exceptional inference throughput and accuracy balance illustrated in Figure~\ref{fig:teaser}. In summary, our main contributions are as follows:

\begin{itemize}
    \item We propose DAF, a framework developed from a PAC-Bayesian perspective to address the plasticity-stability dilemma. By formulating the fusion process as a constrained optimization problem and solving it via Lagrange Multipliers, we construct a single global adapter that dynamically integrates new knowledge, thereby eliminating the retrieval costs and storage overhead associated with maintaining historical adapters.
    
    \item We propose a memory-efficient robust initialization strategy that recursively averages historical adapters to serve as an informative prior. This strategy effectively anchors the learning process to global knowledge patterns, significantly enhancing model stability without accessing past data.
    
    \item We conduct extensive experiments on major CIL benchmarks, where DAF consistently outperforms SOTA methods with accuracy gains of up to 2.75\%.
\end{itemize}

\section{Related Work}
\label{sec:related_work}

CIL methods can be broadly categorized into two primary paradigms: training from scratch and leveraging PTMs.

\subsection{Training from Scratch}
The conventional CIL paradigm involves training models sequentially starting from random initialization. These methods are generally classified into three major categories. \textbf{Regularization-based} methods~\cite{yu2020semantic,serra2018overcoming,li2017learning,yang2024clip,chaudhry2018riemannian,aljundi2018memory} add penalty terms to the loss function to constrain the drift of parameters essential for previous tasks. \textbf{Architecture-based} methods~\cite{ebrahimi2020adversarial,ke2020continual,loo2020generalized,wang2020learn,zhao2022deep} prevent interference by allocating distinct model parameters or sub-networks to each incoming task. \textbf{Rehearsal-based} methods~\cite{aljundi2019gradient,buzzega2020dark,cha2021co2l,chaudhry2021using,chaudhry2018efficient} maintain a small buffer of historical exemplars to approximate joint training with past data. However, despite their fundamental importance, these from-scratch methods frequently encounter challenges related to scalability, data privacy, and performance saturation, prompting a transition toward the PTM-based paradigm~\cite{ smith2021always, wang2022dualprompt, li2024fcs}.

\subsection{Pre-Trained Model-based CIL}

To mitigate the substantial computational burden of training from scratch, recent CIL research heavily utilizes PTMs combined with PEFT. This method typically freezes the massive PTM backbone and optimizes only a small set of incremental parameters, making CIL feasible for resource-constrained scenarios. Methods in this domain are diverse. Prompt-learning methods may optimize a shared pool of prompts~\cite{wang2022learning}, assign disjoint prompts for specific functions~\cite{wang2022dualprompt}, or dynamically compose prompts~\cite{smith2023coda}. Alternative strategies include fine-tuning specific backbone blocks~\cite{zhang2023slca}, combining PEFT with linear classifiers~\cite{zhou2024revisiting, mcdonnell2024ranpac}, or appending task-specific modules~\cite{zhou2024expandable, sun2025mos}.

A critical challenge within this paradigm lies in effectively integrating task-specific parameters. Recent works have explored sophisticated retrieval and fusion mechanisms. For instance, MOS~\cite{sun2025mos} employs an iterative retrieval mechanism to select adapters at test time, while TUNA~\cite{wang2025integrating} uses entropy-based retrieval combined with max-fusion to form a global adapter. In contrast to these retrieval-dependent or heuristic strategies, our proposed DAF theoretically derives optimal fusion weights via the PAC-Bayes framework, constructing a single robust global adapter that balances stability and plasticity without inference-time retrieval.

\section{Methodology}
\label{sec:methodology}

\subsection{Preliminaries}

\paragraph{Class-Incremental Learning.}
Formally, CIL involves learning a sequence of $T$ tasks $\{\mathcal{D}_t\}_{t=1}^T$. Each dataset $\mathcal{D}_t = \{(x_i, y_i)\}_{i=1}^{N_t}$ contains samples $x_i$ with labels $y_i \in \mathcal{C}_t$. The label sets are disjoint, satisfying $\mathcal{C}_t \cap \mathcal{C}_{t'} = \emptyset$ for $t \neq t'$. Following the exemplar-free setting~\cite{wang2022learning,wang2022dualprompt,zhou2024revisiting}, we train the model using only the current data $\mathcal{D}_t$ without accessing past samples. After each task, the model is evaluated on the cumulative label set $\mathcal{Y}_t = \bigcup_{i=1}^{t} \mathcal{C}_i$.

\paragraph{PTM-based CIL via Task-Specific Adapters.}
We adopt a PTM-based framework (e.g., ViT~\cite{dosovitskiy2020image}) where the backbone is frozen and only lightweight adapters are trained. Following~\cite{zhou2024revisiting,wang2025integrating}, bottleneck adapters are inserted parallel to the MLP layers. For input $\mathbf{x}_i \in \mathbb{R}^d$, the output $\mathbf{x}_o$ is computed as:
\begin{align} \label{eq:adapter_forward}
	\mathbf{x}_o = \text{MLP}(\mathbf{x}_i) + \text{ReLU}(\mathbf{x}_i W_{\text{down}})W_{\text{up}},
\end{align}
where $W_{\text{down}} \in \mathbb{R}^{d \times r}$ and $W_{\text{up}} \in \mathbb{R}^{r \times d}$ are projection matrices with rank $r \ll d$. For each task $t$, a specific adapter set $\mathcal{A}_t$ is trained.
To avoid memory expansion, we employ a fusion framework where $\mathcal{A}_t$ is integrated into a single global adapter $\mathcal{A}_g$ after training. Thus, we only maintain the global $\mathcal{A}_g$ and the current temporary $\mathcal{A}_t$. During inference, the model exclusively uses $\mathcal{A}_g$, eliminating the need for task-ID retrieval mechanisms~\cite{sun2025mos,wang2025integrating}. A unified classifier is constructed via prototype-based alignment (see Appendix~\ref{app:alignment}).

\subsection{Theoretical Analysis} \label{sec:theory}
We analyze the generalization error using PAC-Bayes theorem~\cite{alquier2016properties, wang2024forgetting}. Our method uses a task-specific adapter and a global adapter. We define the model space as $\mathcal{H} = \mathcal{H}_s \times \mathcal{H}_g$, where $\mathcal{H}_s$ and $\mathcal{H}_g$ are parameter spaces for the task-specific adapter and global adapter. A model is $h=(h_s, h_g)$. Let $\mathcal{X}$ be the data space. Let loss $\ell: \mathcal{H} \times \mathcal{X} \rightarrow \mathbb{R}^+$ be bounded by $K$. For task $t$, the prior $P_t = P_{\text{init}} \otimes Q_{t-1}^g$ combines fixed initialization $P_{\text{init}}$ and previous global posterior $Q_{t-1}^g$. The posterior $Q_t = Q_t^s \otimes Q_t^g$ combines the learned task-specific distribution $Q_t^s$ and the updated global distribution $Q_t^g$. We define expected risk $\mathcal{R} = \mathbb{E}_{h \sim Q_{t}}\left[ \mathbb{E}_{x \sim \mu_t}[\ell(h,x)] \right]$ and empirical risk $\hat{\mathcal{R}} = \frac{1}{N_t} \sum_{j=1}^{N_t} \mathbb{E}_{h \sim Q_{t}}\left[ \ell(h,x_j) \right]$, where $N_t$ is the sample size.

\begin{theorem}
  \label{th: split pac bayes main}
  Let $\mathcal{D}_t=(x_1,...,x_{N_t})$ be an iid set sampled from distribution $\mu_t$ for task $t$. Let prior $P_t = P_{\text{init}} \otimes Q_{t-1}^g$ and posterior $Q_t = Q_t^s \otimes Q_t^g$ be defined on $\mathcal{H}$. For any loss $\ell$ bounded by $K$, $\lambda>0$, and $\delta\in [0,1]$, the following inequality holds with probability $1-\delta$:
  \begin{align}
    \mathcal{R} \leq & \underbrace{\hat{\mathcal{R}}}_{\text{Empirical Risk}} + \underbrace{\frac{\operatorname{KL}(Q_t^s \| P_{\text{init}})}{\lambda}}_{\text{Generalization Regularization}} + \underbrace{\frac{\operatorname{KL}(Q_t^g \| Q_{t-1}^g)}{\lambda}}_{\text{Forgetting Regularization}} \nonumber \\
    &+ \frac{\log(1/\delta)}{\lambda} + \frac{\lambda K^2}{2N_t}.
  \end{align}
\end{theorem}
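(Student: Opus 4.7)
The plan is to specialize the Alquier-style PAC-Bayes bound for bounded losses to the product structure of the prior and posterior. The starting point is the generic inequality: for any $\lambda>0$, any data-independent prior $P$, and any (possibly data-dependent) posterior $Q$, with probability at least $1-\delta$ over $\mathcal{D}_t$,
\[
\mathbb{E}_{h\sim Q}\!\left[\mathbb{E}_{x\sim\mu_t}\ell(h,x)\right]
\;\le\;
\mathbb{E}_{h\sim Q}\!\left[\tfrac{1}{N_t}\sum_{j=1}^{N_t}\ell(h,x_j)\right]
+ \frac{\operatorname{KL}(Q\|P) + \log(1/\delta) + \Psi(\lambda)}{\lambda},
\]
where $\Psi(\lambda)=\log \mathbb{E}_{h\sim P}\mathbb{E}_{\mathcal{D}_t}\exp\{\lambda(\mathbb{E}_{x\sim\mu_t}\ell(h,x)-\tfrac{1}{N_t}\sum_j\ell(h,x_j))\}$ is the cumulant of the centered empirical process. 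The first step is to control $\Psi(\lambda)$ via Hoeffding's lemma: since $\ell\in[0,K]$ and $\mathcal{D}_t$ is iid, the centered empirical mean for each fixed $h$ is sub-Gaussian with variance proxy $K^2/(4N_t)$, yielding $\Psi(\lambda)\le \lambda^2 K^2/(2N_t)$. Dividing by $\lambda$ produces the $\lambda K^2/(2N_t)$ term that appears in the statement.

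The second and central step is the decomposition of $\operatorname{KL}(Q_t\|P_t)$. Since both $P_t=P_{\text{init}}\otimes Q_{t-1}^g$ and $Q_t=Q_t^s\otimes Q_t^g$ are product measures on $\mathcal{H}_s\times\mathcal{H}_g$, the Radon--Nikodym derivative factorizes and the chain rule for relative entropy gives
\[
\operatorname{KL}(Q_t\|P_t)=\operatorname{KL}(Q_t^s\|P_{\text{init}})+\operatorname{KL}(Q_t^g\|Q_{t-1}^g),
\]
with no cross terms because each marginal on the posterior side is matched against the corresponding marginal on the prior side. Substituting this identity and the Hoeffding bound on $\Psi(\lambda)$ into the generic PAC-Bayes inequality, and identifying the inner expectation over $h\sim Q_t$ with the definition $\hat{\mathcal{R}} = \tfrac{1}{N_t}\sum_j \mathbb{E}_{h\sim Q_t}[\ell(h,x_j)]$ via linearity, immediately produces the three named terms: empirical risk, generalization regularization (against the fixed initialization prior), and forgetting regularization (against the previous global posterior).

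The step I expect to require the most care is verifying that $P_t$ is genuinely data-independent with respect to $\mathcal{D}_t$, since $Q_{t-1}^g$ is itself learned. This is where the continual learning framing does real work: $Q_{t-1}^g$ is measurable with respect to $\mathcal{D}_1,\dots,\mathcal{D}_{t-1}$, and under the standard assumption that past datasets are independent of the fresh iid draw $\mathcal{D}_t$, one can apply PAC-Bayes conditionally on the $\sigma$-algebra of the past and treat $P_t$ as a fixed prior for the current task. Once this measurability point is settled, the remainder is entirely routine: Hoeffding supplies the variance term, the product-measure KL chain rule isolates the two regularizers, and no further arguments are needed.
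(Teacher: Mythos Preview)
Your proposal is correct and follows essentially the same route as the paper: invoke the Alquier-style PAC-Bayes bound for bounded losses (the paper cites this as a black-box lemma, whereas you unpack the Hoeffding step yielding $\lambda K^2/(2N_t)$), then split $\operatorname{KL}(Q_t\|P_t)$ via the product-measure identity $\operatorname{KL}(Q_t^s\otimes Q_t^g\|P_{\text{init}}\otimes Q_{t-1}^g)=\operatorname{KL}(Q_t^s\|P_{\text{init}})+\operatorname{KL}(Q_t^g\|Q_{t-1}^g)$. Your explicit discussion of why $P_t$ is data-free relative to $\mathcal{D}_t$ (conditioning on the past $\sigma$-algebra) is slightly more careful than the paper's one-line remark, but the argument is otherwise identical.
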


The complete proof is in Appendix \ref{proof_modified}.
The bound consists of three key components:
(1) \textbf{Empirical Risk} ($\hat{\mathcal{R}}$): Ensures plasticity by fitting current data.
(2) \textbf{Generalization Regularization} ($\operatorname{KL}(Q_t^s \| P_{\text{init}})$): Constrains task-specific parameters to the robust initialization $P_{\text{init}}$ to prevent overfitting.
(3) \textbf{Forgetting Regularization} ($\operatorname{KL}(Q_t^g \| Q_{t-1}^g)$): Constrains the global adapter to stay close to previous knowledge $Q_{t-1}^g$, ensuring stability.
Thus, an optimal fusion mechanism must balance minimizing current loss while restricting deviations from both the initialization and the previous global state.

\begin{figure*}[t]
    \centering
    \includegraphics[width=0.9\textwidth]{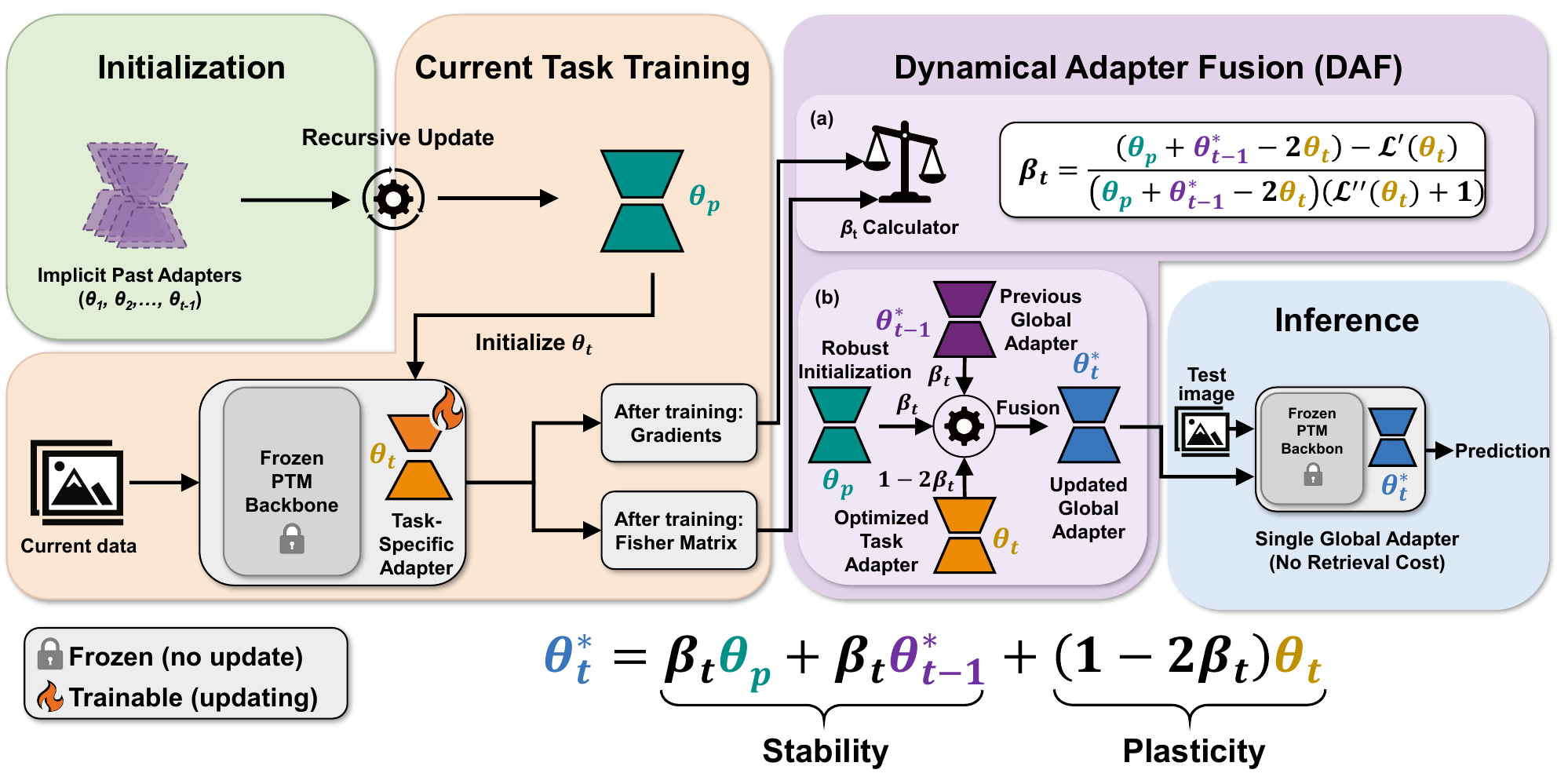} 
    \caption{Overview of the proposed DAF framework. The pipeline consists of four distinct phases: 
    \textbf{(1) Initialization:} We construct a \textbf{Robust Initialization} $\theta_p$ via a recursive update of implicit past adapters; 
    \textbf{(2) Current Task Training:} The \textbf{Optimized Task Adapter} $\theta_t$ is learned on new data, and statistics (Gradients, Fisher Matrix) are computed; 
    \textbf{(3) Dynamical Adapter Fusion :} This core module operates in two steps: (a) the \textbf{$\beta_t$ Calculator} derives the optimal coefficient using the Lagrangian solution, and (b) the \textbf{Fusion} process integrates the Robust Initialization $\theta_p$, the Previous Global Adapter $\theta_{t-1}^*$, and the Optimized Task Adapter $\theta_t$ to form the updated model; 
    \textbf{(4) Inference:} The resulting Single Global Adapter $\theta_t^*$ performs efficient, retrieval-free inference on all learned classes.}
    \label{fig:framework}
\end{figure*}
\subsection{Dynamical Adapter Fusion Method}

Guided by the PAC-Bayes analysis in Section \ref{sec:theory}, we propose Dynamical Adapter Fusion, as shown in Figure~\ref{fig:framework}. Unlike expansion-based methods that rely on dynamic architectures, DAF maintains a fixed model capacity. Instead, it dynamically optimizes the fusion coefficients after each task to adapt to the evolving data distribution.
For each task $t$, we initialize a task adapter with parameters $\theta_{p}$ and optimize it on the current data to obtain $\theta_{t}$. To ensure memory efficiency, we retain only a single global adapter $\theta_{t}^{*}$ by fusing the learned knowledge. Motivated by our theoretical decomposition, which splits the objective into empirical risk, generalization regularization, and forgetting regularization, we construct $\theta_{t}^{*}$ by explicitly integrating the three corresponding parameter states: the optimized $\theta_{t}$, the initialization $\theta_{p}$, and the previous global state $\theta_{t-1}^{*}$. We formulate this fusion as:
\begin{equation} \label{eq:fusion}
    \theta_{t}^{*} = \beta_{t}\theta_{p} + \beta_{t}\theta_{t-1}^{*} + \left(1 - 2\beta_{t}\right)\theta_{t},
\end{equation}
where $\theta$ denotes the individual parameters within the adapters. Here, the element-wise fusion coefficient $\beta_{t}$ balances the trade-off among these competing objectives. Instead of using suboptimal fixed values, we derive the optimal $\beta_t$ by minimizing the bounds defined in Theorem \ref{th: split pac bayes main}.

For the \textit{Empirical Risk}, we treat the optimized task adapter $\theta_{t}$ as a proxy target to avoid expensive retraining. We thus aim to minimize the loss gap $\mathcal{L}(\theta_{t}^{*}) - \mathcal{L}(\theta_{t})$. Regarding regularization, Theorem \ref{th: split pac bayes main} suggests constraining the deviations of $\theta_{t}$ from $\theta_{p}$ (Generalization Regularization) and $\theta_{t}^{*}$ from $\theta_{t-1}^{*}$ (Forgetting Regularization). To preserve plasticity, we enforce the generalization constraint during fusion rather than training. Specifically, we incorporate the deviation $\theta_{t} - \theta_{p}$ into the fusion objective to subtract excessive parameter shift, realigning $\theta_{t}^{*}$ with the robust initialization. We introduce $\Delta\theta$ to capture the aggregate parameter shift required by these stability constraints:
\begin{equation} \label{eq:constraint}
    \Delta\theta = \theta_{t}^{*} - \theta_{t-1}^{*} + \theta_{t} - \theta_{p}.
\end{equation}
By combining the risk approximation and the total parameter shift, we formulate the unified optimization problem as:
\begin{align} \label{eq:optimization_problem}
    \text{min}\quad &  \mathcal{L}(\theta_{t}^{*}) - \mathcal{L}(\theta_{t}) + \frac{1}{2} \left( \theta_{t}^{*} - \theta_{t-1}^{*} + \theta_{t} - \theta_{p} \right)^{2},  \nonumber\\
    \textit{s.t.} \quad & \Delta\theta + \theta_{t-1}^{*} + \theta_{p} - \theta_{t} - \theta_{t}^{*} = 0.
\end{align}

To solve the constrained minimization problem efficiently, we employ the method of Lagrange multipliers. This allows us to incorporate the stability constraint directly into the objective function by introducing a Lagrange multiplier $\lambda$. The Lagrangian function $F$ is defined as:
\begin{align} \label{eq:lagrangian}
    F &= \mathcal{L}(\theta_{t}^{*}) - \mathcal{L}(\theta_{t}) + \frac{1}{2}\left(\theta_{t}^{*} - \theta_{t-1}^{*} + \theta_{t} - \theta_{p}\right)^{2} \nonumber \\ &+\lambda\left(\Delta\theta + \theta_{t-1}^{*} + \theta_{p} - \theta_{t} - \theta_{t}^{*}\right).
\end{align}

To facilitate the derivation, we approximate the loss difference using a second-order Taylor expansion around the optimized parameter $\theta_{t}$. Omitting the high-order infinitesimal terms, we have:
\begin{equation} \label{eq:taylor}
    \mathcal{L}(\theta_{t}^{*}) - \mathcal{L}(\theta_{t}) \approx \mathcal{L}^{\prime}(\theta_{t})(\theta_{t}^{*} - \theta_{t}) + \frac{\mathcal{L}^{\prime\prime}(\theta_{t})}{2}(\theta_{t}^{*} - \theta_{t})^{2}.
\end{equation}

Furthermore, combining the fusion mechanism defined in Eq. \eqref{eq:fusion} with the definition of $\Delta\theta$, we can express the deviation $\theta_{t}^{*} - \theta_{t}$ in terms of the relaxation factor $\Delta\theta$:
\begin{equation} \label{eq:theta_diff}
    \theta_{t}^{*} - \theta_{t} = \frac{\beta_{t}}{\beta_{t}-1}\Delta\theta.
\end{equation}      
Please refer to Appendix~\ref{app:theta_diff_proof} for detailed demonstrations of this relationship.

Next, we substitute Eq. \eqref{eq:fusion} into the constraint term of the Lagrangian. This transforms the stability constraint equation into a form dependent on $\beta_t$:
\begin{align} \label{eq:constraint_trans}
    \theta_{t-1}^{*} + \theta_{p} - \theta_{t} - \theta_{t}^{*} = (1-\beta_{t})(\theta_{p} + \theta_{t-1}^{*} - 2\theta_{t}).
\end{align}

Substituting Eq. \eqref{eq:taylor}, Eq. \eqref{eq:theta_diff}, and Eq. \eqref{eq:constraint_trans} back into Eq. \eqref{eq:lagrangian}, we rewrite the Lagrangian $F$ entirely in terms of $\beta_t$ and $\Delta\theta$:
\begin{align} \label{eq:lagrangian_sub}
    F = &\mathcal{L}^{\prime}(\theta_{t})\frac{\beta_{t}}{\beta_{t}-1}\Delta\theta + \frac{\mathcal{L}^{\prime\prime}(\theta_{t})}{2}\left(\frac{\beta_{t}}{\beta_{t}-1}\Delta\theta\right)^{2} + \frac{1}{2}\Delta\theta^{2} \nonumber \\
    &+ \lambda\left[\Delta\theta + (1-\beta_{t})(\theta_{p} + \theta_{t-1}^{*} - 2\theta_{t})\right].
\end{align}

To find the optimal solution, we first take the derivative of the Lagrangian with respect to $\beta_{t}$ and set it to zero. This condition is essential for locating the stationary point of the function:
\begin{align} \label{eq:partial_beta}
    \frac{\partial F}{\partial\beta_{t}} &= -\frac{1}{(\beta_{t}-1)^{2}}\mathcal{L}^{\prime}(\theta_{t})\Delta\theta - \frac{\beta_{t}}{(\beta_{t}-1)^{3}}\mathcal{L}^{\prime\prime}(\theta_{t})\Delta\theta^{2}  \nonumber\\ &- \lambda(\theta_{p} + \theta_{t-1}^{*} - 2\theta_{t}) = 0.
\end{align}

By solving these equations, we obtain a feasible solution for $\beta_{t}$ that minimizes the objective function while satisfying the constraints:
\begin{equation} \label{eq:optimal_beta}
    \beta_{t} = \frac{(\theta_{p} + \theta_{t-1}^{*} - 2\theta_{t}) - \mathcal{L}^{\prime}(\theta_{t})}{(\theta_{p} + \theta_{t-1}^{*} - 2\theta_{t})(\mathcal{L}^{\prime\prime}(\theta_{t}) + 1)}.
\end{equation}
Please refer to Appendix~\ref{app:beta_proof} for the detailed deduction of this solution.

\subsection{Optimization Approximation} \label{sec:approx}

To implement the optimal solution in Eq. \eqref{eq:optimal_beta}, we require $\beta_{t} \in (0, 0.5)$ to balance stability and plasticity. The approximation $\beta_{t} \approx \frac{1}{\mathcal{L}^{\prime\prime}(\theta_{t}) + 1}$ implies that $\mathcal{L}^{\prime\prime}(\theta_{t}) > 1$ is necessary for $\beta_{t} < 0.5$.

We approximate the Hessian diagonal using the Fisher Information Matrix diagonal~\cite{kirkpatrick2017overcoming,zhou2025ferret}, denoted as $F$, computed on the current task data. Let $F(\theta_t)$ be the value corresponding to parameter $\theta_t$, and $F_{\min}$ and $\bar{F}$ be the minimum and mean values of $F$. To satisfy $\mathcal{L}^{\prime\prime}(\theta_{t}) > 1$ while preserving relative parameter importance, we introduce a scaling hyperparameter $\alpha > 0$ to regulate the distribution of $\beta_t$:
\begin{equation} \label{eq:fisher_scaling}
    \mathcal{L}^{\prime\prime}(\theta_t) \approx \alpha \frac{F(\theta_t) - F_{\min}}{\bar{F} - F_{\min}} + 1.
\end{equation}
This ensures the approximated Hessian lower bound is 1. Substituting Eq. \eqref{eq:fisher_scaling} into Eq. \eqref{eq:optimal_beta} yields the computable coefficient:
\begin{equation} \label{eq:final_beta}
    \beta_t = \frac{(\bar{F} - F_{\min})\left[(\theta_p + \theta_{t-1}^* - 2\theta_t) - \mathcal{L}^{\prime}(\theta_t)\right]}{(\theta_p + \theta_{t-1}^* - 2\theta_t)\left[\alpha F(\theta_t) - \alpha F_{\min} + 2\bar{F} - 2F_{\min}\right]}.
\end{equation}
In practice, we clip $\beta_t$ to $[0.001, 0.499]$ for numerical stability.

\subsection{Robust Initialization via Historical Knowledge}
\label{sec:initialization}

As established in Section \ref{sec:theory}, $\theta_p$ serves as the center of the prior distribution $P_{\text{init}}$. Instead of standard random initialization, we utilize the average of previously learned adapters to improve stability. For a new task $t > 1$, we define $\theta_p$ as the arithmetic mean of the optimized parameters from the preceding $t-1$ tasks:
\begin{equation} \label{eq:init_assignment}
    \theta_p = \theta_{avg}^{t-1} = \frac{1}{t-1}\sum_{i=1}^{t-1}\theta_i,
\end{equation}
where $\theta_i$ denotes the optimized parameters for task $i$. For the first task ($t=1$), standard random initialization is applied.

Directly computing Eq. \eqref{eq:init_assignment} requires storing all historical parameters, which causes high memory overhead. To avoid this, we use a recursive update. After training task $t$, we update the running average as:
\begin{equation} \label{eq:running_avg}
    \theta_{avg}^{t} = \frac{t-1}{t}\theta_{avg}^{t-1} + \frac{1}{t}\theta_{t}.
\end{equation}
By maintaining only the running average $\theta_{avg}$, this method constructs a robust initialization $\theta_p$ containing common historical patterns without requiring additional storage.

\subsection{Overview of Our Method}

The DAF execution procedure for each task $t$ consists of four key steps. First, we initialize the task adapter $\theta_t$ using the historical prior $\theta_p$ (Eq. \eqref{eq:init_assignment}) and train it on the current dataset $\mathcal{D}_t$. Second, we compute the optimal fusion coefficient $\beta_t$ via Eq. \eqref{eq:final_beta} using the gradient and Fisher information derived from $\theta_t$. Third, we fuse $\theta_t$, $\theta_p$, and the previous global state $\theta_{t-1}^*$ to update the global adapter $\theta_{t}^*$ (Eq. \eqref{eq:fusion}). Finally, we update the running average $\theta_{avg}^{t}$ (Eq. \eqref{eq:running_avg}) and discard all temporary parameters to ensure memory efficiency. Detailed algorithmic steps are provided in Appendix~\ref{algorithm}.

\begin{table*}[t]
\centering
\caption{
    Average ($\bar{\mathcal{A}}$) and final task ($\mathcal{A}_T$) Top-1 accuracy comparison on four challenging class-incremental learning benchmarks. 
    All methods use ViT-B/16-IN21K as the backbone and are evaluated in the exemplar-free setting.
    The specific task configurations (e.g., number of base classes and incremental steps) are detailed in the experimental setup section.
    The best performance in each column is highlighted in \textbf{bold}.
    $\dagger$ Results of the single global adapter.
}
\label{tab:main_results}
\begin{tabular*}{\textwidth}{@{\extracolsep{\fill}}l|cc|cc|cc|cc}
    \toprule
    \multicolumn{1}{c|}{\multirow{2}{*}{\textbf{Method}}} & 
    \multicolumn{2}{c|}{\textbf{CIFAR-100}} & 
    \multicolumn{2}{c|}{\textbf{ImageNet-R}} &
    \multicolumn{2}{c|}{\textbf{ImageNet-A}} & 
    \multicolumn{2}{c}{\textbf{ObjectNet}} \\
    \cmidrule(lr){2-3} \cmidrule(lr){4-5} \cmidrule(lr){6-7} \cmidrule(lr){8-9}
    & $\bar{\mathcal{A}}$ (\%) & $\mathcal{A}_T$ (\%) & 
      $\bar{\mathcal{A}}$ (\%) & $\mathcal{A}_T$ (\%) &
      $\bar{\mathcal{A}}$ (\%) & $\mathcal{A}_T$ (\%) &
      $\bar{\mathcal{A}}$ (\%) & $\mathcal{A}_T$ (\%) \\
    \midrule
    Finetune         & 38.90 & 20.17 & 32.32 & 22.78 & 24.28 & 14.51  & 19.14 & 8.73 \\
    \midrule
    L2P~\cite{wang2022learning}            & 85.94 & 79.93 & 75.46 & 69.77 & 49.39 & 41.71 & 63.78 & 52.19 \\
    DualPrompt~\cite{wang2022dualprompt}   & 87.87 & 81.15 & 73.10 & 67.18 & 53.71 & 41.67 & 59.27 & 49.33 \\
    CODA-Prompt~\cite{smith2023coda}     & 89.11 & 81.96 & 77.97 & 72.27 & 53.54 & 42.73 & 66.07 & 53.29 \\
    SLCA~\cite{zhang2023slca}              & 92.49 & 88.55 & 81.17 & 77.00 & 68.66 & 58.74 & 72.55 & 61.30 \\ 
    RanPAC~\cite{mcdonnell2023ranpac}    & 94.00 & 90.62 & 82.98 & 77.94 & 69.32 & 61.82 & 72.76 & 62.02 \\
    SSIAT~\cite{tan2024semantically}             & 93.52 & 90.07 & 83.20 & 78.85 & 70.83 & 62.23 & 73.65 & 62.45 \\
    SimpleCIL~\cite{zhou2024revisiting}    & 87.57 & 81.26 & 61.26 & 54.55 & 59.77 & 48.91 & 65.45 & 53.59 \\
    APER + Adapter~\cite{zhou2024revisiting} & 90.65 & 85.15 & 75.82 & 67.95 & 60.47 & 49.37 & 67.18 & 55.24 \\
    EASE~\cite{zhou2024expandable}         & 91.51 & 85.80 & 81.74 & 76.17 & 65.34 & 55.04 & 70.84 & 57.86 \\
    TUNA$\dagger$~\cite{wang2025integrating} & 92.68 & 86.19 & 81.24 & 72.92 & 56.14 & 28.64 & 72.61 & 58.52 \\
    MOS~\cite{sun2025mos}                  & 93.30 & 89.25 & 82.96 & 77.93 & 67.08 & 56.22 & 74.69 & 63.62 \\
    \midrule
    \textbf{\method (Ours)}                   & \bf{94.58} & \bf{91.15} & \bf{84.01} & \bf{79.63} & \bf{72.06} & \bf{62.54} & \bf{76.11} & \bf{66.37} \\
    \bottomrule
\end{tabular*}
\end{table*}
\begin{figure*}[t]
    \centering
    \begin{subfigure}[b]{0.24\textwidth}
        \centering
        \includegraphics[width=\textwidth]{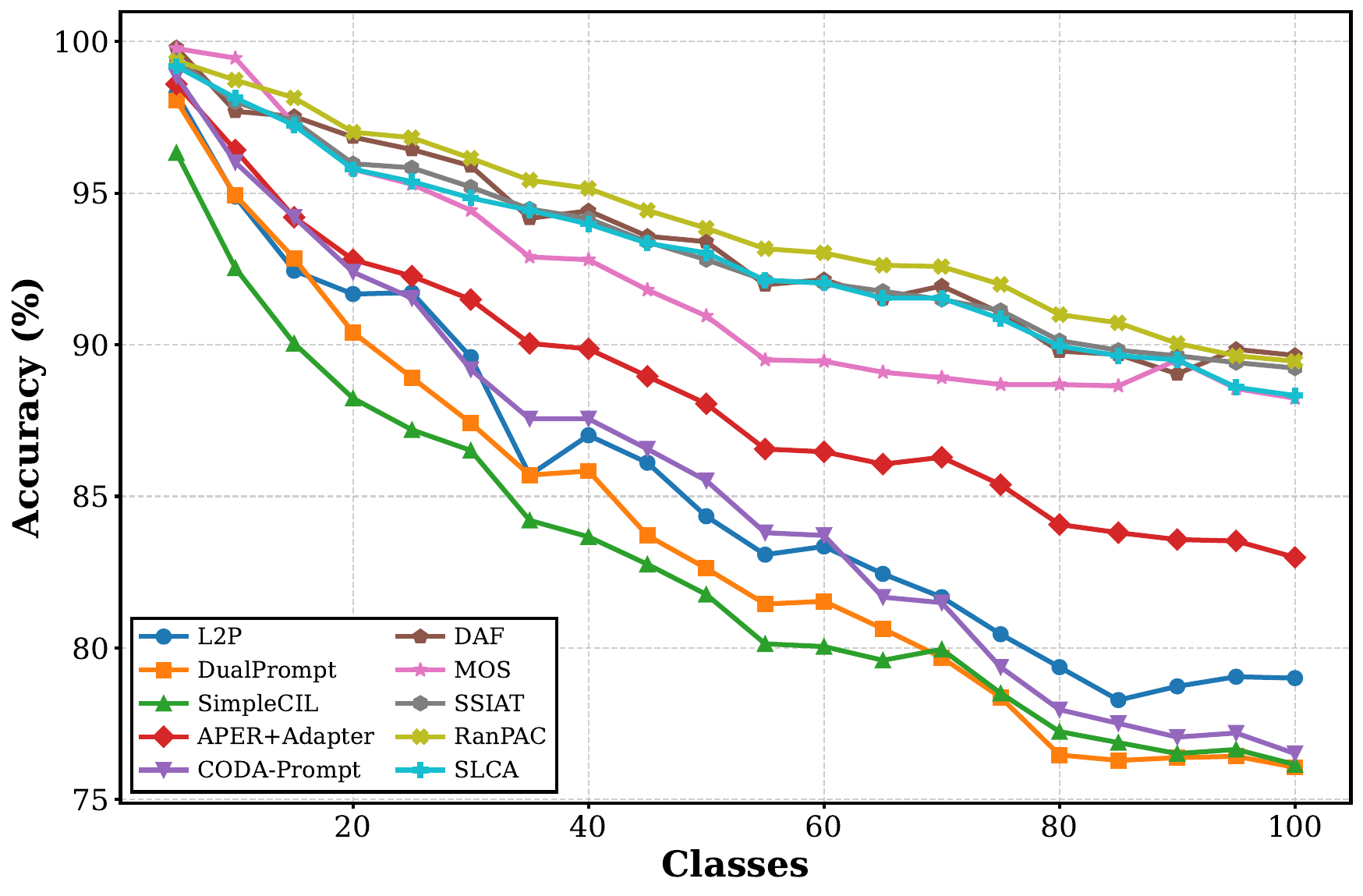} 
        \caption{CIFAR-100}
        \label{subfig:curve_cifar100}
    \end{subfigure}
    \hfill
    \begin{subfigure}[b]{0.24\textwidth}
        \centering
        \includegraphics[width=\textwidth]{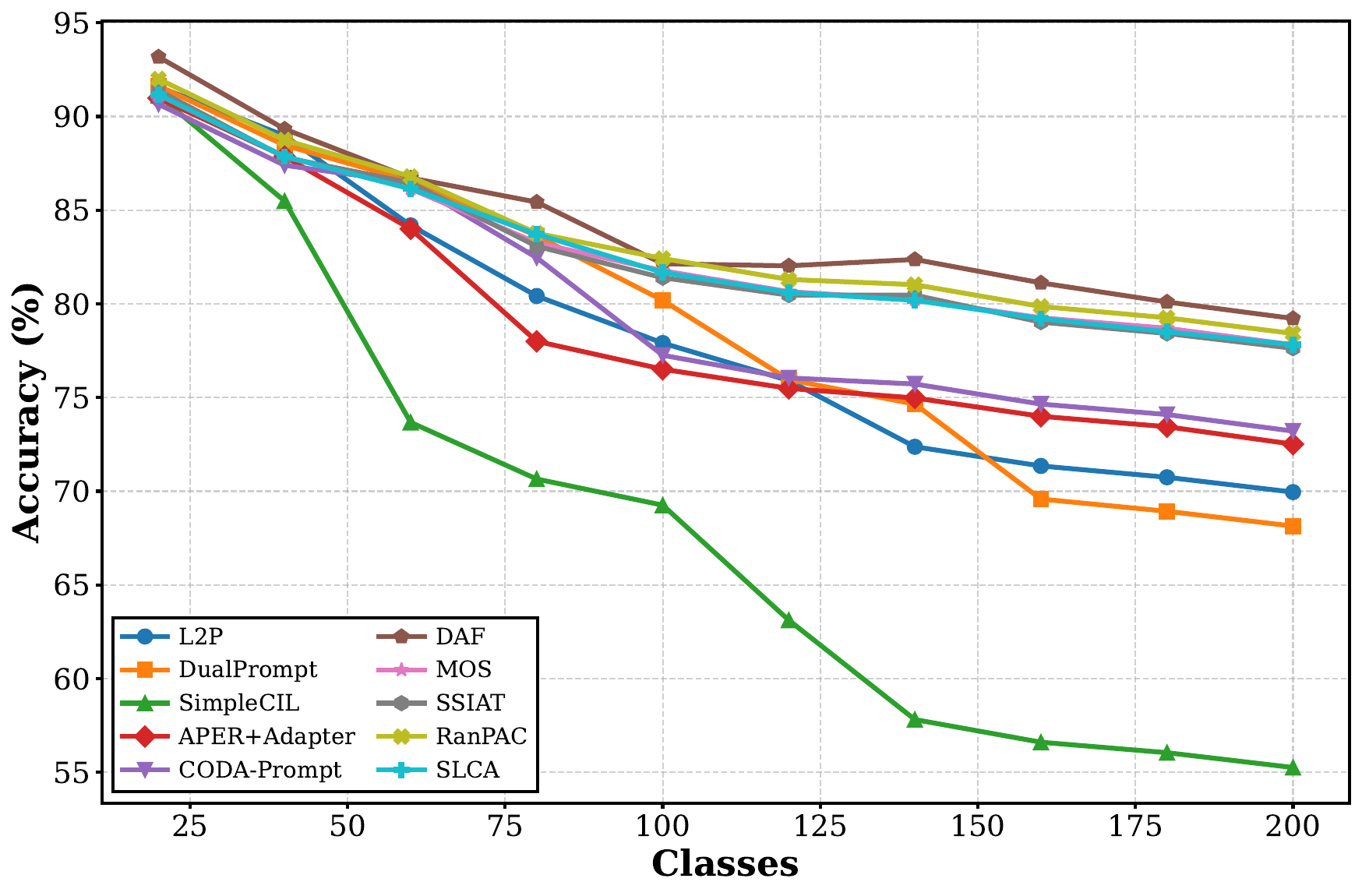}
        \caption{ImageNet-R}
        \label{subfig:curve_imr}
    \end{subfigure}
    \hfill
    \begin{subfigure}[b]{0.24\textwidth}
        \centering
        \includegraphics[width=\textwidth]{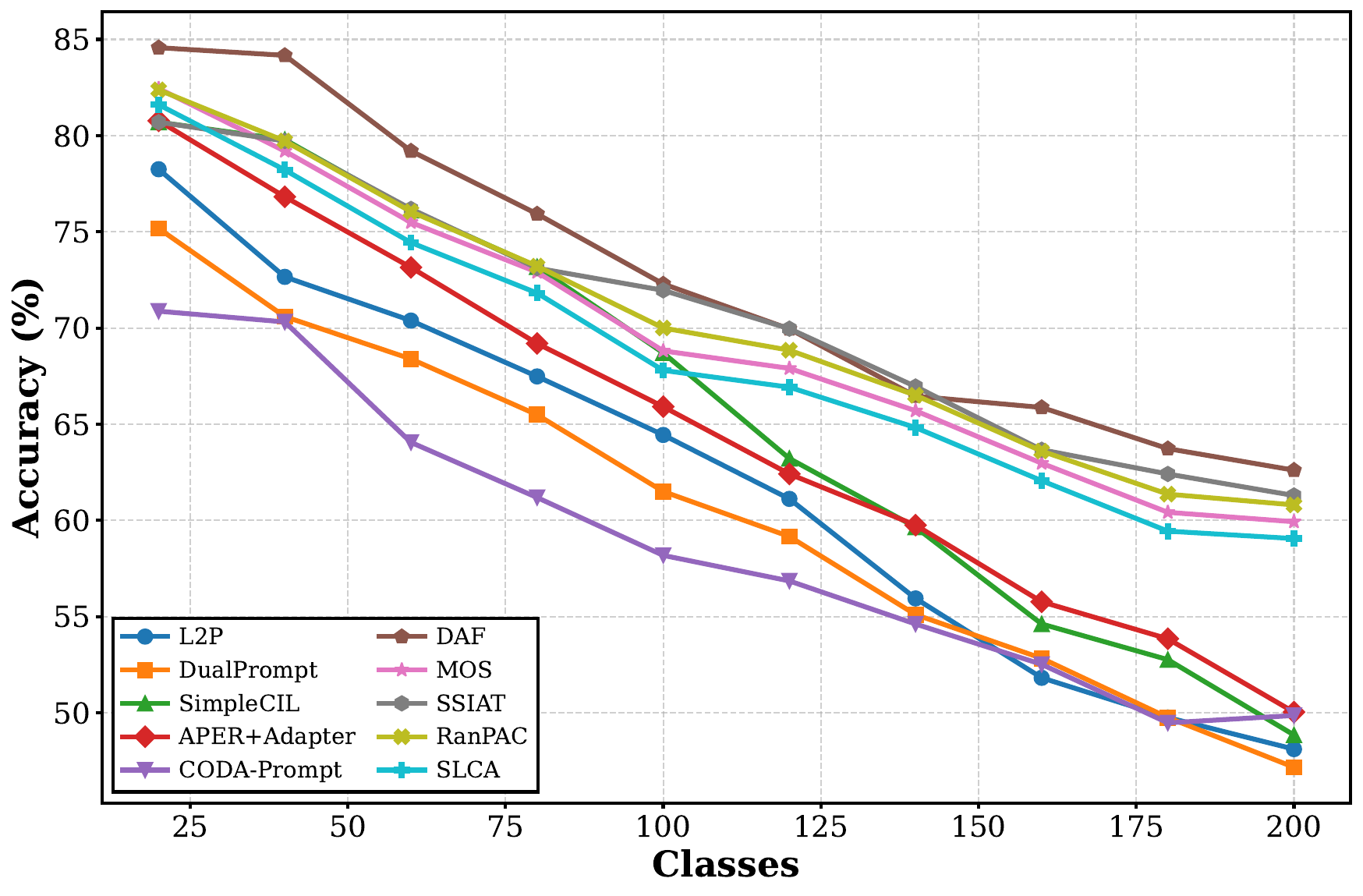}
        \caption{ImageNet-A}
        \label{subfig:curve_ima}
    \end{subfigure}
    \hfill
    \begin{subfigure}[b]{0.24\textwidth}
        \centering
        \includegraphics[width=\textwidth]{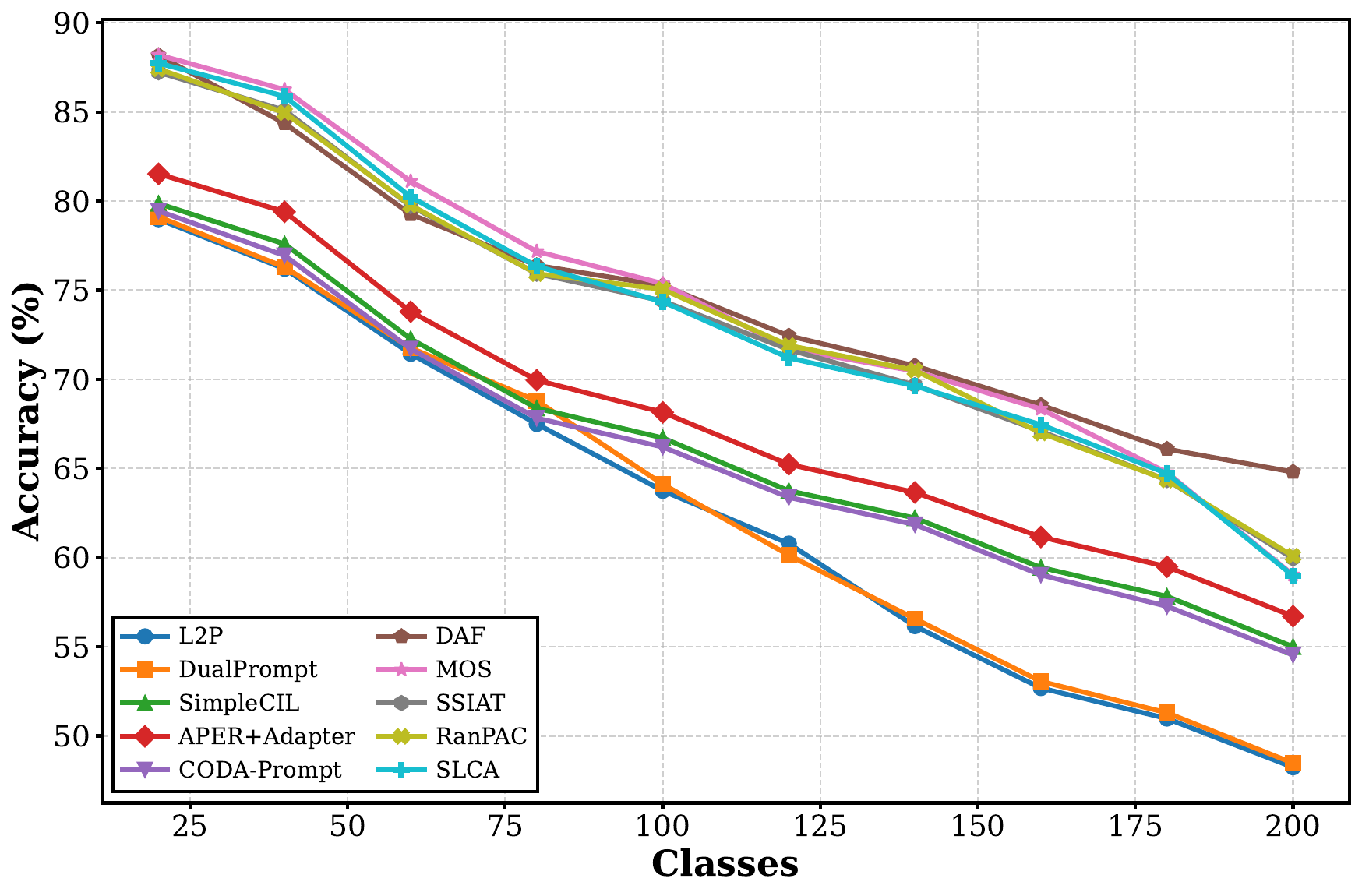}
        \caption{ObjectNet}
        \label{subfig:curve_obj}
    \end{subfigure}

    \caption{Incremental accuracy trends on four benchmarks using the ViT-B/16-IN1K backbone. 
    DAF (red curve) is compared against state-of-the-art exemplar-free methods. 
    Across all datasets, DAF consistently demonstrates superior performance. 
    The curves highlight our method's robust ability to mitigate catastrophic forgetting and maintain high accuracy as the number of tasks increases.}
    \label{fig:main_results_curves}
\end{figure*}
\section{Experiments}
\label{sec:experiments}

\subsection{Implementation Details}

\noindent\textbf{Datasets and Task Protocols.}
We follow the standard evaluation protocol established by~\cite{zhou2024revisiting} to ensure a fair comparison. We conduct experiments on four widely used CIL benchmarks: CIFAR-100~\cite{krizhevsky2009learning}, ImageNet-R~\cite{hendrycks2021many}, ImageNet-A~\cite{hendrycks2021natural}, and ObjectNet~\cite{barbu2019objectnet}. These datasets cover both standard classification scenarios and challenging settings with significant domain shifts. Regarding the task construction, CIFAR-100 is divided into 20 tasks with 5 classes per task. The remaining three datasets, each containing 200 classes, are divided into 10 tasks with 20 classes per task. Consistent with~\cite{rebuffi2017icarl}, we shuffle the class order using a fixed random seed of 1993 before splitting the tasks.

\noindent\textbf{Training Details.}
We implement all experiments using PyTorch~\cite{paszke2019pytorch} based on the PILOT codebase~\cite{sun2023pilot}. The training is conducted on NVIDIA GeForce RTX 2080 Ti GPUs. For fair comparison, we conduct experiments using two distinct backbones: ViT-B/16-IN21K (pre-trained on ImageNet-21K) and ViT-B/16-IN1K (pre-trained on ImageNet-1K). For each task, we train the models for 20 epochs with a batch size of 48. We use an SGD optimizer with momentum, setting the initial learning rate to 0.01 and applying a cosine annealing schedule for learning rate decay. For the adapter configuration, the projection dimension is set to 16 for all adapter-based methods. Regarding the hyperparameters of our method, the scaling factor $\alpha$ in Eq.~\ref{eq:fisher_scaling} is set to 1.25.
\begin{figure*}[t]
    \centering
    \begin{subfigure}[b]{0.32\textwidth}
        \centering
        \includegraphics[width=\textwidth]{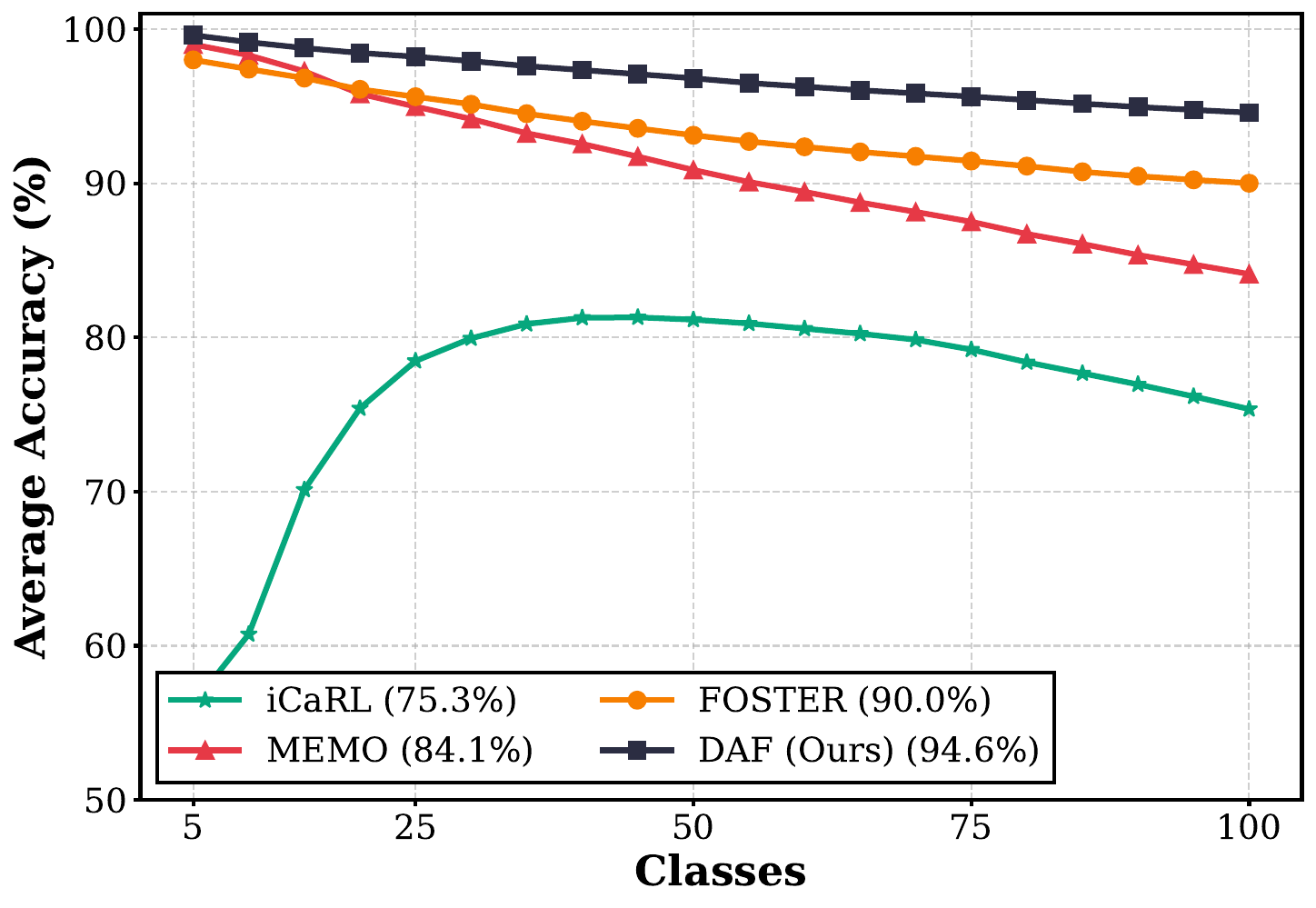} 
        \caption{CIFAR-100}
        \label{subfig:rehearsal_cifar}
    \end{subfigure}
    \hfill
    \begin{subfigure}[b]{0.32\textwidth}
        \centering
        \includegraphics[width=\textwidth]{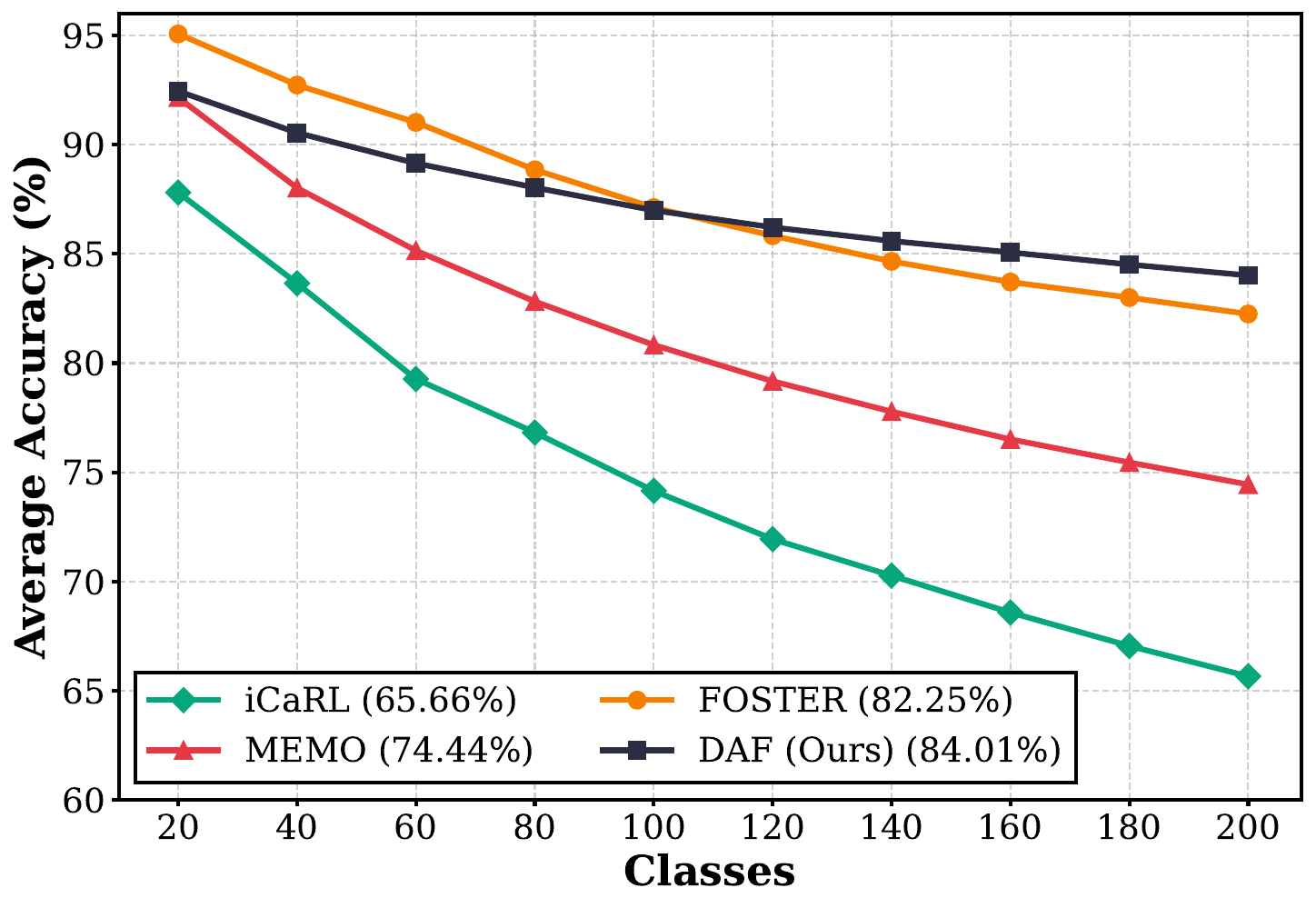} 
        \caption{ImageNet-R}
        \label{subfig:rehearsal_imr}
    \end{subfigure}
    \hfill
    \begin{subfigure}[b]{0.32\textwidth}
        \centering
        \includegraphics[width=\textwidth]{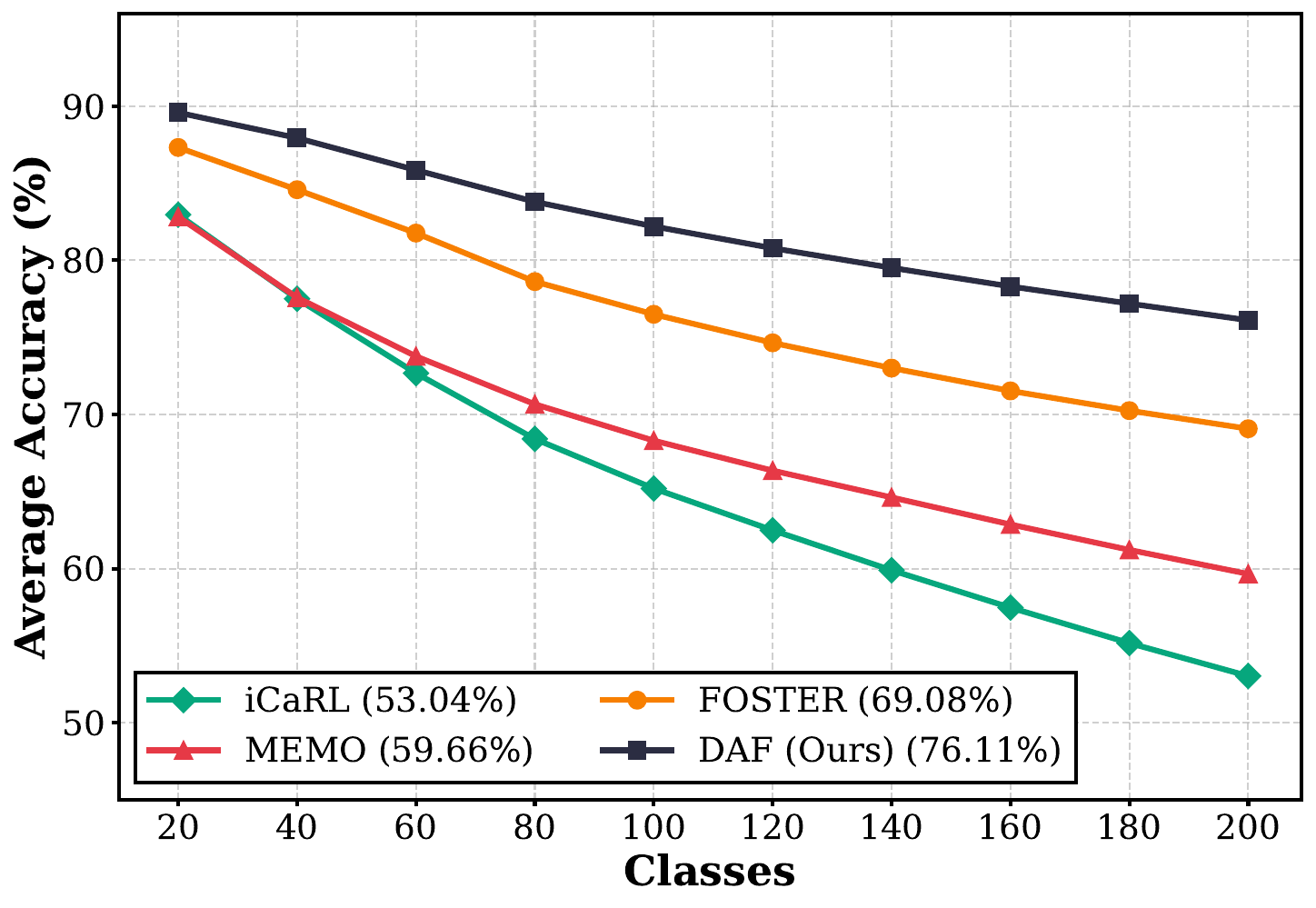} 
        \caption{ObjectNet}
        \label{subfig:rehearsal_obj}
    \end{subfigure}
    \caption{Comparison with rehearsal-based methods. Rehearsal-based methods use 20 exemplars per class. Using ViT-B/16-IN21K as the backbone, DAF significantly outperforms strong replay baselines (iCaRL, MEMO, FOSTER) across CIFAR-100, ImageNet-R, and ObjectNet. Notably, DAF achieves this without storing past exemplars, validating the effectiveness of our parameter fusion mechanism over traditional sample replay.}
    \label{fig:rehearsal_comparison}
\end{figure*}

\noindent\textbf{Comparison Methods.}
We compare DAF against state-of-the-art CIL methods. We mainly compare with PTM-based methods, including L2P~\cite{wang2022learning}, DualPrompt~\cite{wang2022dualprompt}, CODA-Prompt~\cite{smith2023coda}, RanPAC~\cite{mcdonnell2023ranpac}, SimpleCIL~\cite{zhou2024revisiting}, APER~\cite{zhou2024revisiting}, SLCA~\cite{zhang2023slca}, SSIAT~\cite{tan2024semantically}, EASE~\cite{zhou2024expandable}, MOS~\cite{sun2025mos}, and TUNA~\cite{wang2025integrating}. We also include rehearsal-based methods adapted to the PTM setting, such as FOSTER~\cite{wang2022foster}, MEMO~\cite{zhou2022model}, and iCaRL~\cite{rebuffi2017icarl}. Additionally, we report the performance of the 'Finetune' baseline, which sequentially fine-tunes the entire PTM on each task. This baseline serves as a lower bound to indicate the degree of catastrophic forgetting.

\noindent\textbf{Evaluation Metrics.}
We follow the standard CIL evaluation protocol defined in~\cite{rebuffi2017icarl}. Performance is measured using two key metrics. 
The \textbf{Average Accuracy} ($\bar{\mathcal{A}}$) measures the overall performance across the entire learning trajectory: $\bar{\mathcal{A}} = \frac{1}{T} \sum_{t=1}^{T} \left( \frac{1}{t} \sum_{j=1}^{t} A_{t,j} \right)$, where $A_{t,j}$ denotes the accuracy on task $j$ after training on task $t$.
The \textbf{Final Average Accuracy} ($\mathcal{A}_T$) evaluates the mean performance on all tasks after the final session: $\mathcal{A}_T = \frac{1}{T} \sum_{j=1}^{T} A_{T,j}$.
Additionally, we introduce two metrics to analyze the trade-off between stability and plasticity. 
\textbf{Stability} ($S$) measures the retention of knowledge on previous tasks after the final training stage: $S = \frac{1}{T-1}\sum_{j=1}^{T-1} A_{T,j}$. 
\textbf{Plasticity} ($P$) evaluates the learning capability on new tasks by averaging the immediate accuracy of each task right after it is learned: $P = \frac{1}{T}\sum_{j=1}^{T} A_{j,j}$. 
Higher values for all metrics indicate better performance.
\subsection{Main Results}
\label{sec:main_results}

\paragraph{Comparison with SOTA Exemplar-Free Methods.} 
We evaluate DAF against SOTA exemplar-free methods, with detailed results presented in Table~\ref{tab:main_results}. 
Across all four benchmarks, DAF achieves the highest Average Accuracy ($\bar{\mathcal{A}}$) and Final Accuracy ($\mathcal{A}_T$). 
For instance, on the challenging ObjectNet benchmark, DAF achieves a final accuracy of 66.37\%, outperforming the leading retrieval-based method, MOS, by 2.75\%. 
This result supports our central claim that constructing a theoretically grounded global adapter is a more effective strategy for mitigating interference than relying on adapter retrieval mechanisms. 
While retrieval-based methods like MOS are effective, their performance heavily depends on the accuracy of the selector, which can be unstable under domain shifts. 
DAF avoids this dependency by rigorously fusing historical knowledge into a single global adapter using PAC-Bayes bounds.

To further evaluate robustness with a weaker backbone, we conduct experiments using ViT-B/16-IN1K. Figure~\ref{fig:main_results_curves} illustrates the evolution of Average Accuracy ($\bar{\mathcal{A}}$) as the number of observed classes increases. Consistent with the IN21K results, DAF maintains a clear advantage throughout the incremental sequence across all benchmarks. Notably, on the difficult ObjectNet dataset, DAF exhibits a significantly slower decay in $\bar{\mathcal{A}}$ compared to the runner-up method MOS, underscoring its superior stability. These results confirm that the effectiveness of our dynamical fusion mechanism remains agnostic to the strength of the pre-trained backbone.

\paragraph{Comparison with Strong Rehearsal-Based Methods.}
We also compare DAF against strong rehearsal-based methods, despite our method being exemplar-free. As shown in Figure~\ref{fig:rehearsal_comparison}, DAF surpasses these methods on CIFAR-100, ImageNet-R, and ObjectNet. On the 10-task ObjectNet sequence, DAF achieves a final average accuracy of 76.11\%, establishing a significant lead of 7.03\% over the state-of-the-art replay method, FOSTER. Notably, the performance gap widens as more tasks are learned. This suggests that our dynamical fusion method, which rigorously integrates historical priors, is a more scalable solution for lifelong learning than relying on the overhead of storing and replaying limited exemplars.
\begin{figure*}[t!]
    \centering
    
    \begin{subfigure}[b]{0.32\textwidth}
        \centering
        \includegraphics[width=\linewidth]{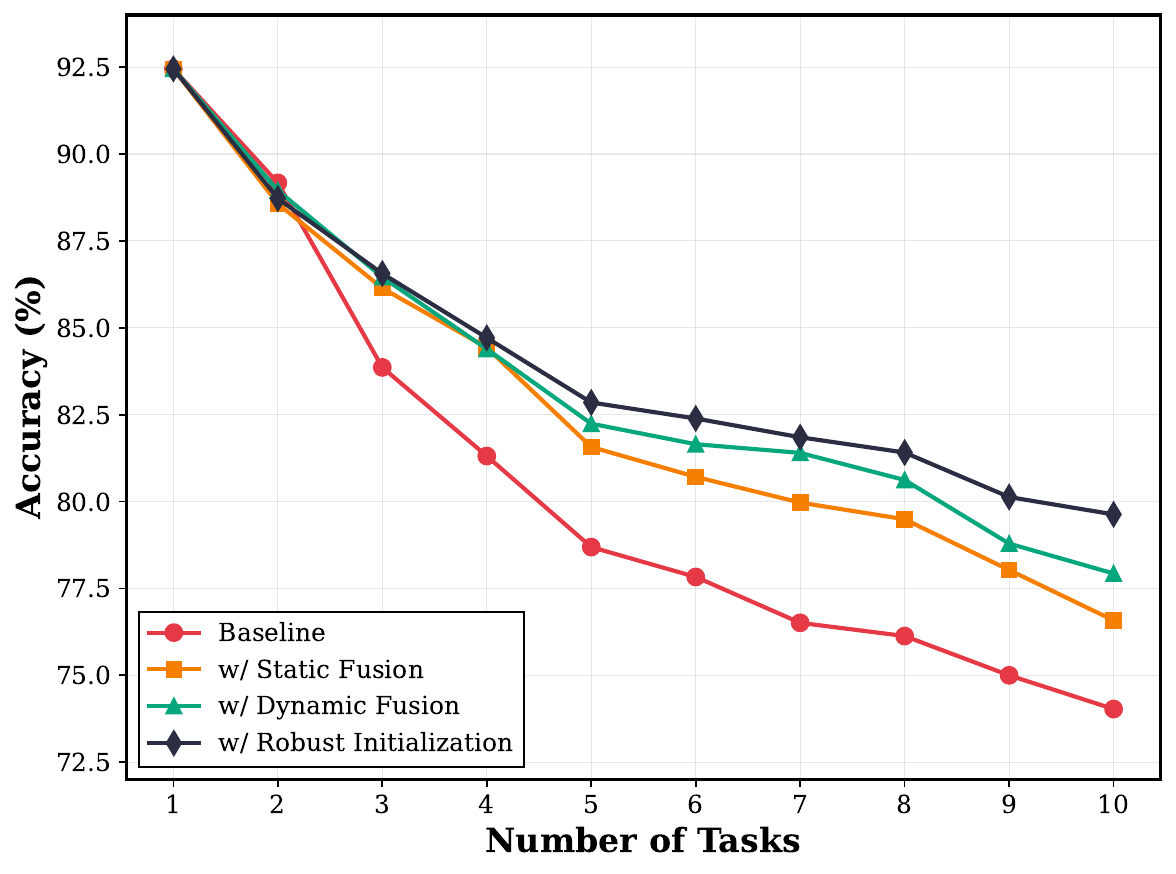}
        \caption{Ablation of DAF Components}
        \label{fig:ablation_sub}
    \end{subfigure}
    \hfill 
    \begin{subfigure}[b]{0.32\textwidth}
        \centering
        \includegraphics[width=\linewidth]{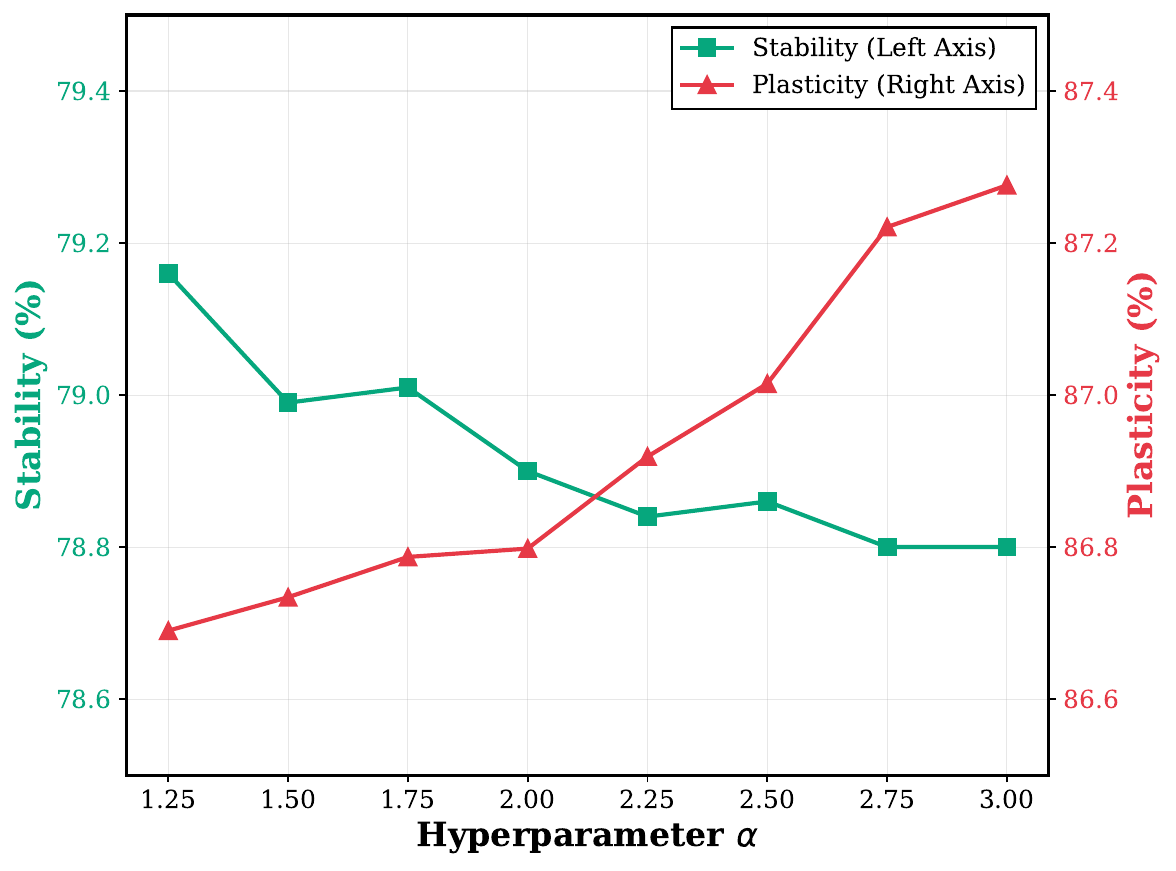}
        \caption{Sensitivity of Hyperparameter $\alpha$}
        \label{fig:alpha_sub}
    \end{subfigure}
    \hfill
    \begin{subfigure}[b]{0.32\textwidth}
        \centering
        \includegraphics[width=\linewidth]{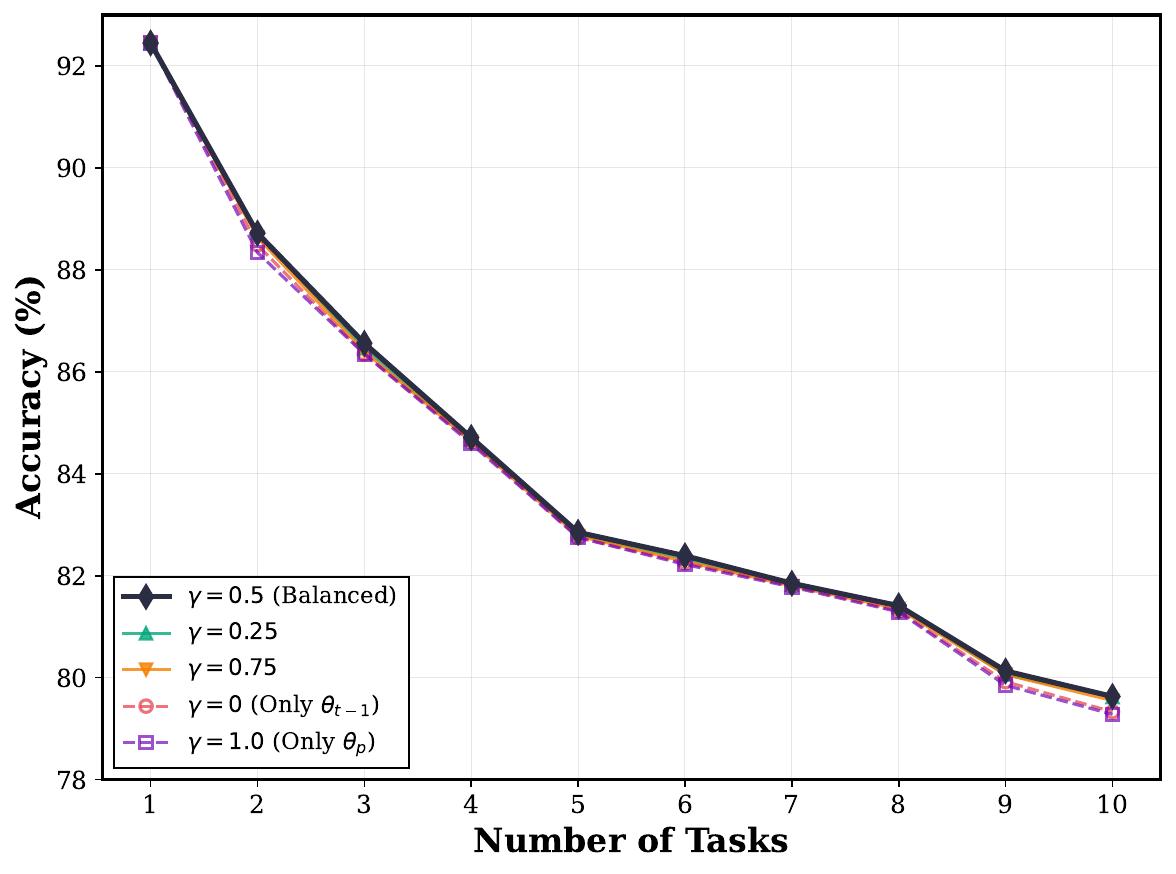}
        \caption{Impact of Prior Weight $\gamma$}
        \label{fig:gamma_sub}
    \end{subfigure}

    \caption{
        Ablation and sensitivity analysis of DAF on ImageNet-R using the ViT-B/16-IN21K backbone.
        \textbf{(a)} Incremental ablation study demonstrating the cumulative gains from Static Fusion, Dynamical Fusion, and Robust Initialization.
        \textbf{(b)} Dual-axis analysis of hyperparameter $\alpha$, illustrating the explicit trade-off between Stability (Green) and Plasticity (Red).
        \textbf{(c)} Evaluation of the balancing weight $\gamma$, confirming that the theoretically derived equal weighting ($\gamma=0.5$) of historical priors yields optimal performance.
    }
    \label{fig:analysis_row}
\end{figure*}

\subsection{Ablation Study}
\label{sec:ablation}

To dissect the contribution of each key component in DAF, we conduct a progressive ablation study on ImageNet-R (Figure~\ref{fig:ablation_sub}).
Our analysis begins with a \textbf{Baseline} that trains task-specific adapters with random initialization and uses the final task's adapter for inference.
First, we introduce \textbf{Static Fusion}, which aggregates initialization, task-specific, and previous global parameters with fixed weights ($\beta=1/3$). This yields an immediate performance uplift, highlighting the benefit of retaining historical information to mitigate forgetting.
Building upon this, we incorporate \textbf{Dynamical Fusion}. By replacing rigid weights with adaptive coefficients $\beta_t$ derived from our PAC-Bayes framework, this mechanism significantly enhances performance, effectively balancing the trade-off between empirical risk and stability.
Finally, the full DAF method integrates \textbf{Robust Initialization}. Initializing the new adapter with the running average of past parameters acts as a strong informative prior. This prevents starting optimization from scratch and guides the model toward a solution compatible with historical patterns. Each component thus plays a distinct and synergistic role, culminating in the robust performance of DAF.

\subsection{Discussion}

\paragraph{Hyperparameter Sensitivity.}

Figure~\ref{fig:alpha_sub} analyzes the impact of the constraint strength $\alpha$. As $\alpha$ increases from 1.25 to 3.0, the relaxed historical constraint improves Plasticity but slightly reduces Stability. Despite this trade-off, overall performance remains highly robust (see Appendix~\ref{app:sensitivity}), indicating that DAF is not sensitive to precise tuning. We set $\alpha=1.25$ to prioritize stability.

\begin{table}[t]
\centering
\caption{
    Impact of Initialization on ImageNet-R using the ViT-B/16-IN21K backbone.
    We compare Random, Previous Task, and our Robust Initialization under two inference paradigms: utilizing the Last Task Adapter vs. our Global Fused Adapter (DAF).
    The best performance in each column is highlighted in \textbf{bold}. Our Robust Initialization consistently yields the best results across both paradigms.
}
\label{tab:init_discussion}
\setlength{\tabcolsep}{8pt}
\begin{tabular}{l|cc}
\toprule
\textbf{Initialization Method} & $\mathcal{A}_T$ (\%) & $\bar{\mathcal{A}}$ (\%) \\
\midrule
\multicolumn{3}{l}{\textit{Paradigm 1: Last Task Adapter}} \\
\midrule
\quad Random Initialization            & 80.50 & 74.03 \\
\quad Previous Task ($\theta_{t-1}$)   & 82.65 & 77.10 \\
\quad Robust Initialization (Ours)              & 83.57 & 77.90 \\
\midrule
\multicolumn{3}{l}{\textit{Paradigm 2: Global Fused Adapter (DAF)}} \\
\midrule
\quad Random Initialization            & 78.50 & 71.52 \\
\quad Previous Task ($\theta_{t-1}$)   & 82.98 & 78.58 \\
\quad Robust Initialization (Ours)              & \textbf{84.01} & \textbf{79.63} \\
\bottomrule
\end{tabular}
\end{table}
\paragraph{Balancing Prior Knowledge.}
Our derived fusion objective inherently assigns equal weight to the recursive historical average $\theta_p$ and the previous global adapter $\theta_{t-1}^*$. To validate this design, we introduce a balancing coefficient $\gamma$ to re-weight these terms according to the following formulation: 
$\theta_{t}^{*} = 2\gamma\beta_{t}\theta_{p} + 2(1-\gamma)\beta_{t}\theta_{t-1}^{*} + \left(1 - 2\beta_{t}\right)\theta_{t}$.
Figure~\ref{fig:gamma_sub} presents the performance on ImageNet-R across different $\gamma$ values. 
We observe that the balanced setting ($\gamma=0.5$), which recovers our original theoretical derivation, achieves the optimal accuracy of 79.63\%. 
Shifting towards extremes leads to degradation: setting $\gamma=0$ (ignoring the robust average $\theta_p$) or $\gamma=1$ (ignoring the specific previous state $\theta_{t-1}^*$) results in lower performance (79.33\% and 79.28\%, respectively). 
This finding suggests that $\theta_p$ and $\theta_{t-1}^*$ capture complementary aspects of past knowledge, representing general historical patterns and specific recent states respectively, and that effectively combining them is crucial for maximizing performance.

\paragraph{Effectiveness of Robust Initialization.}
We further investigate the role of initialization strategies across two inference paradigms: using the \textit{Last Task Adapter} and our \textit{Global Fused Adapter}. Table~\ref{tab:init_discussion} details the comparison results.
A critical observation is that DAF with random initialization yields an Average Accuracy of 71.52\%, which is notably inferior to the Last Task Adapter baseline (74.03\%). 
This performance drop occurs because integrating randomly initialized parameters into the fusion process introduces significant noise, disrupting the stability of the global model.
However, when equipped with our Robust Initialization, DAF effectively mitigates this issue, achieving the state-of-the-art performance of 79.63\%.
This contrast highlights that while dynamical fusion is powerful, its success heavily relies on a high-quality initialization ($\theta_p$) to ensure that the merged parameters are constructive rather than destructive.

\section{Conclusion}
To address the high inference latency and potential retrieval errors inherent in current Pre-Trained Model-based CIL methods, we propose Dynamical Adapter Fusion, a novel framework designed to construct a single robust global adapter.
DAF formulates the parameter fusion process as a constrained optimization problem derived from PAC-Bayes theorem.
By employing the Method of Lagrange Multipliers, we derive optimal weights to dynamically fuse task-specific and global parameters, ensuring an optimal balance between stability and plasticity without storing historical adapters.
Furthermore, we propose a Robust Initialization strategy that leverages recursive historical knowledge to effectively capture global patterns.
Extensive experiments on major CIL benchmarks demonstrate that DAF achieves superior accuracy over SOTA methods while achieving high training and inference efficiency. We discuss the limitations and future work in Appendix~\ref{app:limitations}

\section{Impact Statements}

Our proposed DAF framework presents a significant advancement in Class-Incremental Learning by enabling efficient, lifelong knowledge accumulation without the burden of catastrophic forgetting.
By eliminating the need for expanding architectures or inference-time retrieval, DAF drastically reduces computational and memory overhead. 
This efficiency is particularly relevant for real-world applications on resource-constrained edge devices, such as autonomous robots and mobile assistants, where low latency is critical.
Ethically, our method adheres to a strict exemplar-free protocol. 
By avoiding the storage of historical samples, DAF mitigates privacy risks associated with sensitive data retention, promoting the responsible deployment of AI in privacy-critical domains like healthcare.
Furthermore, the high training and inference throughput of our method contributes to sustainable AI development by minimizing the energy footprint required for continuous model evolution.
\nocite{langley00}

\bibliography{example_paper}
\bibliographystyle{icml2026}

\newpage
\appendix
\onecolumn
\section{Limitations and Future Work}
\label{app:limitations}

Despite the strong performance and efficiency of DAF, we acknowledge certain limitations that offer avenues for future research.

First, similar to most standard Class-Incremental Learning (CIL) methods, our current framework relies on the availability of clear task boundaries during the training phase to perform the dynamical fusion.
While this assumption holds for many structured learning scenarios, extending DAF to a fully Task-Free setting, where task boundaries are ambiguous or unknown, remains a challenging but promising direction.

Second, as a PEFT method, the upper bound of DAF's performance is naturally influenced by the representational power of the frozen pre-trained backbone.
While we have verified efficacy on Vision Transformers, exploring how our dynamical fusion strategy generalizes to other architectures, such as larger Foundation Models, warrants further investigation.

Third, regarding our theoretical formulation, the derivation of the optimal fusion coefficient $\beta_t$ relies on a second-order Taylor expansion of the loss function and approximates the Hessian diagonal using the Fisher Information Matrix.
The claim of theoretical optimality therefore rests on the accuracy of these approximations.
In scenarios with highly irregular loss landscapes where the local quadratic assumption may not hold, the derived coefficients could potentially deviate from the ideal solution, although our empirical results suggest robust performance in standard settings.

We plan to address these aspects in future work to enhance the versatility and autonomy of our CIL framework.

\section{Prototype-based Classifier and Alignment}
\label{app:alignment}

Following the established settings in MOS~\cite{sun2025mos} and TUNA~\cite{wang2025integrating}, we adopt a non-parametric strategy to construct the classification head. Let $\mathcal{D}_c$ denote the set of training samples available for a specific class $c$ in the current task. We calculate the class prototype $\mathbf{p}_c$ by computing the centroid of the feature representations for all samples in this set:
\begin{equation}
\label{eq:app_prototype_calc}
    \mathbf{p}_c = \frac{1}{|\mathcal{D}_c|} \sum_{\mathbf{x} \in \mathcal{D}_c} \phi(\mathbf{x}; \mathcal{A}_t),
\end{equation}
where $\phi(\mathbf{x}; \mathcal{A}_t)$ represents the feature extraction function of the backbone model parameterized by the current task's adapter $\mathcal{A}_t$, and $|\mathcal{D}_c|$ is the number of samples in class $c$. The corresponding classifier weight vector $\mathbf{w}_c$ for class $c$ is subsequently derived by applying $L_2$ normalization to the prototype: $\mathbf{w}_c = \mathbf{p}_c / \|\mathbf{p}_c\|_2$.

To address the challenge of \textit{subspace misalignment}, where historical prototypes diverge from the current feature space, we utilize the feature distribution modeling method from TUNA~\cite{wang2025integrating} and MOS~\cite{sun2025mos}. Specifically, we model the feature distribution of each previously learned class $k$ as a multivariate Gaussian distribution $\mathcal{N}(\boldsymbol{\mu}_k, \boldsymbol{\Sigma}_k)$. During the training of the current task $t$, we synthesize pseudo-features by sampling from these stored distributions. These features are then used to update the prototypes of old classes, resulting in a fully aligned global classifier matrix $W_t = [\mathbf{w}_1, \ldots, \mathbf{w}_{|\mathcal{Y}_t|}]$, where $|\mathcal{Y}_t|$ denotes the total number of accumulated classes.

\section{Theoretical Analysis for Task-Specific and Global Adapters} \label{proof_modified}

We first reintroduce the classical PAC-Bayes adapted from~\cite{alquier2016properties, wang2024forgetting} as the Lemma.

\begin{lemma}[Adapted from~\cite{alquier2016properties}, Thm 4.1] \label{th:naive pac bayes}
    Let $\mathcal{D}=(x_1,...,x_m)$ be an iid set sampled from the law $\mu$.
  For any data-free prior $P$, for any loss function $\ell$ bounded by $K$, any $\lambda>0,\delta\in [0,1]$, one has with probability $1-\delta$ for any posterior $Q\in\mathcal{M}_1(\mathcal{H})$:
  \begin{equation}
  \mathbb{E}_{h\sim Q}\mathbb{E}_{x\sim \mu}[\ell(h,x)] \leq \frac{1}{m} \sum_{i=1}^m \mathbb{E}_{h\sim Q}[\ell(h,x_i)] + \frac{\operatorname{KL}(Q\| P) + \log(1/\delta)}{\lambda} + \frac{\lambda K^2}{2m}, 
  \end{equation}
  where $\mathcal{M}_1(\mathcal{H})$ denotes the set of all probability distributions on $\mathcal{H}$.
\end{lemma}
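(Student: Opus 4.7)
The plan is to prove this bound via the standard Alquier-Catoni recipe, combining three ingredients: a Hoeffding-type moment generating function control for each fixed hypothesis, Markov's inequality applied in the randomness of the sample, and the Donsker-Varadhan change-of-measure inequality. The essential outcome is that a single high-probability event over $\mathcal{D}$ simultaneously supports the claim for every posterior $Q$, which is what makes the bound applicable to $Q$ chosen after seeing the data.

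First, I would fix $h \in \mathcal{H}$ and consider the centered summands $Y_i(h) = \ell(h,x_i) - L(h)$, where $L(h) = \mathbb{E}_{x \sim \mu}[\ell(h,x)]$ and $\hat{L}(h) = \frac{1}{m}\sum_{i=1}^m \ell(h,x_i)$. Since $\ell$ takes values in $[0,K]$ and the $x_i$ are iid, Hoeffding's lemma on each of the $m$ independent summands, applied at scale $\lambda/m$, yields
\begin{equation*}
\mathbb{E}_{\mathcal{D} \sim \mu^m}\!\bigl[\exp\!\bigl(\lambda(L(h) - \hat{L}(h))\bigr)\bigr] \leq \exp\!\bigl(\lambda^2 K^2 / (2m)\bigr).
\end{equation*}
Because the prior $P$ is data-free, Fubini's theorem allows me to integrate this inequality in $h \sim P$ without loss, so
\begin{equation*}
\mathbb{E}_{\mathcal{D}}\,\mathbb{E}_{h \sim P}\!\bigl[\exp\!\bigl(\lambda(L(h) - \hat{L}(h))\bigr)\bigr] \leq \exp\!\bigl(\lambda^2 K^2 / (2m)\bigr),
\end{equation*}
and Markov's inequality then gives, with probability at least $1-\delta$ over $\mathcal{D}$,
\begin{equation*}
\mathbb{E}_{h \sim P}\!\bigl[\exp\!\bigl(\lambda(L(h) - \hat{L}(h))\bigr)\bigr] \leq \tfrac{1}{\delta}\exp\!\bigl(\lambda^2 K^2 / (2m)\bigr).
\end{equation*}

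On the same high-probability event, I would invoke the Donsker-Varadhan variational formula: for any measurable $\Phi$ and any $Q$ absolutely continuous with respect to $P$,
\begin{equation*}
\mathbb{E}_{h \sim Q}[\Phi(h)] \leq \operatorname{KL}(Q \| P) + \log \mathbb{E}_{h \sim P}[\exp \Phi(h)].
\end{equation*}
Choosing $\Phi(h) = \lambda(L(h) - \hat{L}(h))$ and plugging in the Markov bound gives, simultaneously for every admissible $Q$,
\begin{equation*}
\lambda\,\mathbb{E}_{h \sim Q}[L(h) - \hat{L}(h)] \leq \operatorname{KL}(Q \| P) + \log(1/\delta) + \lambda^2 K^2 / (2m).
\end{equation*}
Dividing by $\lambda > 0$, using linearity of expectation to push $\mathbb{E}_{h \sim Q}$ inside the empirical average in $\hat{L}$, and rearranging recovers exactly the stated inequality.

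The main subtle point I expect is the order of quantifiers: Markov's inequality produces a single data-dependent event, while Donsker-Varadhan then converts the $P$-integrated MGF bound into a statement uniform over all posteriors $Q$ on that very event, so the final bound holds for data-dependent $Q$ without a union bound. A secondary bookkeeping item is the Hoeffding constant (the tightest variant for range $K$ yields $\lambda^2 K^2/(8m)$ rather than $\lambda^2 K^2/(2m)$); the statement uses the coarser constant, which can be obtained either from a slightly looser sub-Gaussian estimate or by absorbing the factor into the conventional normalization used in Alquier's original formulation. Beyond these, the argument is routine and requires no topological or measurability assumptions on $\mathcal{H}$ beyond those implicit in the existence of $P$ and $\operatorname{KL}(Q\|P)$.
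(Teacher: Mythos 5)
Your proof is correct, and it is worth noting that the paper itself does not prove this lemma at all: it is imported verbatim as a known result, with the proof deferred entirely to the citation of Alquier et al., Theorem 4.1. Your argument is the standard one underlying that cited theorem --- Hoeffding's lemma applied to each centered summand at scale $\lambda/m$, Tonelli/Fubini to integrate the moment generating function bound against the data-free prior $P$, Markov's inequality to obtain a single high-probability event over $\mathcal{D}$, and Donsker--Varadhan change of measure to convert the $P$-integrated bound into one uniform over all posteriors $Q$ on that event. All steps are sound, and your two flagged subtleties are handled correctly: the order of quantifiers (Markov before Donsker--Varadhan, so no union bound over $Q$ is needed and the bound applies to data-dependent posteriors) is exactly the crux of the argument, and the constant $\lambda^2 K^2/(2m)$ is indeed coarser than the tight Hoeffding constant $\lambda^2 K^2/(8m)$, so the stated inequality follows a fortiori; the $1/2$ convention matches Alquier's sub-Gaussian (``Hoeffding assumption'') normalization. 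The only cosmetic point is that your first display asserts the $1/(2m)$ constant directly where the computation naturally yields $1/(8m)$; since the former is implied by the latter, nothing is wrong, but stating the tight constant first and then relaxing it would read more cleanly. In short, you have supplied the proof that the paper leaves to the reference, and it is complete.
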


\subsection{Setup and Definitions}

To adapt the PAC-Bayes framework to our method, we formalize the structural decomposition of the parameter space and the distributions used in our algorithm.

\begin{definition}[Factorized Parameter Space and Distributions] \label{def: factorized dists}
    Let the hypothesis space be decomposable into $\mathcal{H} = \mathcal{H}_s \times \mathcal{H}_g$, where $\mathcal{H}_s$ represents the task-specific parameters (special adapter) and $\mathcal{H}_g$ represents the global parameters (global adapter). 
    For a specific task $t$, we define the distributions as follows:
    \begin{itemize}
        \item \textbf{Prior $P_t$}: Constructed before observing $\mathcal{D}_t$, defined as the product measure of a fixed pre-trained initialization $P_{\text{init}}$ and the previous global posterior $Q_{t-1}^g$:
        \begin{equation}
             P_t(h_s, h_g) = P_{\text{init}}(h_s) \otimes Q_{t-1}^g(h_g).
        \end{equation}
        
        \item \textbf{Posterior $Q_t$}: The variational family is restricted to product measures (Mean-Field approximation) of the learned task-specific adapter $Q_t^s$ and the updated global adapter $Q_t^g$:
        \begin{equation}
            Q_t(h_s, h_g) = Q_t^s(h_s) \otimes Q_t^g(h_g).
        \end{equation}
    \end{itemize}
    Crucially, $P_{\text{init}}$ is data-independent, and $Q_{t-1}^g$ depends only on past tasks $\mathcal{D}_{1:t-1}$, making $P_t$ a valid data-free prior relative to $\mathcal{D}_t$.
\end{definition}

Based on this setup, we derive the generalization bound for the current task $t$.

\begin{theorem}
  \label{th: split pac bayes}
  Let $\mathcal{D}_t=(x_1,...,x_{m_t})$ be an iid set sampled from the distribution $\mu_t$ for task $t$. 
  For any loss function $\ell$ bounded by $K$, any $\lambda>0, \delta\in [0,1]$, following the setting defined in Definition \ref{def: factorized dists}, the following inequality holds with probability $1-\delta$:
  \begin{align}
    \mathbb{E}_{h \sim Q_{t}}\left[ \mathbb{E}_{x \sim \mu_t}[\ell(h,x)] \right] 
    \leq & \frac{1}{m_t} \sum_{j=1}^{m_t} \mathbb{E}_{h \sim Q_{t}}\left[ \ell(h,x_j) \right]
    + \frac{\operatorname{KL}(Q_t^s \| P_{\text{init}})}{\lambda} + \frac{\operatorname{KL}(Q_t^g \| Q_{t-1}^g)}{\lambda} \nonumber\\
    & + \frac{\log(1/\delta)}{\lambda} + \frac{\lambda K^2}{2m_t}.
  \end{align}
\end{theorem}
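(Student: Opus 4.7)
The plan is to reduce the statement to a direct invocation of the classical PAC-Bayes bound in Lemma \ref{th:naive pac bayes}, with the main work being (i) verifying that the factorized prior $P_t = P_{\text{init}} \otimes Q_{t-1}^g$ qualifies as a data-free prior for task $t$, and (ii) splitting the resulting KL term along the product structure of the posterior and prior.

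First I would check the admissibility condition. Lemma \ref{th:naive pac bayes} requires the prior to be independent of the current iid sample $\mathcal{D}_t \sim \mu_t^{\otimes N_t}$. By Definition \ref{def: factorized dists}, $P_{\text{init}}$ is the fixed pre-trained initialization and hence data-independent, while $Q_{t-1}^g$ is the global posterior produced after training on tasks $1,\dots,t-1$, which depends only on $\mathcal{D}_{1:t-1}$. Since the tasks are learned sequentially and $\mathcal{D}_t$ has not been observed at the moment $P_t$ is constructed, the product measure $P_t$ is a valid data-free prior relative to $\mathcal{D}_t$. The posterior $Q_t = Q_t^s \otimes Q_t^g$ is an arbitrary element of the mean-field variational family on $\mathcal{H}_s \times \mathcal{H}_g$, which is a subset of $\mathcal{M}_1(\mathcal{H})$, so Lemma \ref{th:naive pac bayes} applies and yields, with probability at least $1-\delta$,
\begin{equation}
    \mathbb{E}_{h\sim Q_t}\mathbb{E}_{x\sim \mu_t}[\ell(h,x)] \leq \frac{1}{N_t}\sum_{j=1}^{N_t} \mathbb{E}_{h\sim Q_t}[\ell(h,x_j)] + \frac{\operatorname{KL}(Q_t \| P_t) + \log(1/\delta)}{\lambda} + \frac{\lambda K^2}{2N_t}.
\end{equation}

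Next I would exploit the product structure. The key algebraic identity is the tensorization (chain rule) of the KL divergence for product measures: for any $Q_1, P_1$ on $\mathcal{H}_s$ and $Q_2, P_2$ on $\mathcal{H}_g$,
\begin{equation}
    \operatorname{KL}\bigl(Q_1 \otimes Q_2 \,\|\, P_1 \otimes P_2\bigr) = \operatorname{KL}(Q_1 \| P_1) + \operatorname{KL}(Q_2 \| P_2).
\end{equation}
This follows from writing the Radon-Nikodym derivative $\frac{d(Q_1\otimes Q_2)}{d(P_1\otimes P_2)}(h_s, h_g) = \frac{dQ_1}{dP_1}(h_s)\cdot \frac{dQ_2}{dP_2}(h_g)$, taking logs, and using Fubini to separate the two integrals. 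Applying this identity to $Q_t = Q_t^s \otimes Q_t^g$ and $P_t = P_{\text{init}} \otimes Q_{t-1}^g$ immediately gives
\begin{equation}
    \operatorname{KL}(Q_t \| P_t) = \operatorname{KL}(Q_t^s \| P_{\text{init}}) + \operatorname{KL}(Q_t^g \| Q_{t-1}^g),
\end{equation}
and substituting this into the bound above produces the claimed inequality.

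I do not anticipate any substantive obstacle; the proof is essentially a one-line KL tensorization applied on top of the Alquier--Guedj bound. The only conceptual point that deserves care is verifying temporal independence so that $Q_{t-1}^g$ can legitimately enter the prior without invalidating the data-free hypothesis of Lemma \ref{th:naive pac bayes}. If one were to be fully rigorous this would require stating the bound conditionally on the sigma-algebra generated by $\mathcal{D}_{1:t-1}$ and then integrating out, but the conclusion is unchanged since $\mathcal{D}_t$ is drawn iid from $\mu_t$ independently of past tasks.
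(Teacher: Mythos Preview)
Your proposal is correct and matches the paper's own proof essentially step for step: apply Lemma~\ref{th:naive pac bayes} with $P=P_t$ and $Q=Q_t$ after checking that $P_t$ is data-free for task $t$, then decompose $\operatorname{KL}(Q_t\|P_t)$ via the additivity of KL on product measures and substitute. The only cosmetic differences are that the paper writes out the KL split as an explicit double integral rather than invoking the Radon--Nikodym tensorization, and it does not spell out the conditioning-on-$\mathcal{D}_{1:t-1}$ subtlety you mention.
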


\begin{proof}
    We apply Lemma \ref{th:naive pac bayes} to the task $t$. Let $P = P_t$ and $Q = Q_t$. Since $P_t$ is independent of the current samples $\mathcal{D}_t$, the standard PAC-Bayes bound holds with probability $1-\delta$:
    \begin{equation} \label{eq:base_bound}
    \mathbb{E}_{h\sim Q_t}\mathbb{E}_{x\sim \mu}[\ell(h,x)] \leq \frac{1}{m} \sum_{i=1}^m \mathbb{E}_{h\sim Q_t}[\ell(h,x_i)] + \frac{\operatorname{KL}(Q_t \| P_t)}{\lambda} + \frac{\log(1/\delta)}{\lambda} + \frac{\lambda K^2}{2m_t}.
    \end{equation}
    
    The key step is to decompose the divergence term $\operatorname{KL}(Q_t \| P_t)$. Following Definition \ref{def: factorized dists}, both the prior and the posterior are product measures on the space $\mathcal{H}_s \times \mathcal{H}_g$. By the additivity property of the Kullback-Leibler divergence for independent distributions, we have:
    \begin{align}
        \operatorname{KL}(Q_t \| P_t) &= \operatorname{KL}(Q_t^s \otimes Q_t^g \| P_{\text{init}} \otimes Q_{t-1}^g) \nonumber \\
        &= \int_{\mathcal{H}_s}\int_{\mathcal{H}_g} Q_t^s(h_s)Q_t^g(h_g) \log \left( \frac{Q_t^s(h_s)Q_t^g(h_g)}{P_{\text{init}}(h_s)Q_{t-1}^g(h_g)} \right) d h_g d h_s \nonumber \\
        &= \int_{\mathcal{H}_s} Q_t^s(h_s) \log \frac{Q_t^s(h_s)}{P_{\text{init}}(h_s)} dh_s \cdot \underbrace{\int_{\mathcal{H}_g} Q_t^g(h_g) dh_g}_{=1} \nonumber \\
        &\quad + \int_{\mathcal{H}_g} Q_t^g(h_g) \log \frac{Q_t^g(h_g)}{Q_{t-1}^g(h_g)} dh_g \cdot \underbrace{\int_{\mathcal{H}_s} Q_t^s(h_s) dh_s}_{=1} \nonumber \\
        &= \operatorname{KL}(Q_t^s \| P_{\text{init}}) + \operatorname{KL}(Q_t^g \| Q_{t-1}^g).
    \end{align}
    
    Substituting this result back into Eq. (\ref{eq:base_bound}) and separating the terms concludes the proof.
\end{proof}

\begin{remark}
    Theorem \ref{th: split pac bayes} theoretically grounds the three objectives of our method to achieve a balance between plasticity and stability:
    \begin{enumerate}
        \item The empirical risk term $\frac{1}{m_t} \sum_{j=1}^{m_t} \mathbb{E}_{h \sim Q_{t}}\left[ \ell(h,z_j) \right]$ encourages the adapter to learn from current data, ensuring the model's \textbf{plasticity}.
        \item The term $\frac{\operatorname{KL}(Q_t^s \| P_{\text{init}})}{\lambda}$ regularizes the task-specific adapter towards the robust pre-trained initialization. This not only prevents overfitting but also helps align the current model distribution with potential future distributions, contributing to the model's \textbf{stability}.
        \item The term $\frac{\operatorname{KL}(Q_t^g \| Q_{t-1}^g)}{\lambda}$ constrains the global adapter to remain close to the knowledge acquired from previous tasks. This explicitly ensures \textbf{stability} by mitigating catastrophic forgetting.
    \end{enumerate}
\end{remark}

\section{Proof of Relationship between $\theta_{t}$, $\theta_{t}^{*}$ and $\Delta\theta$}
\label{app:theta_diff_proof}

From the definition of the stability constraint parameter $\Delta\theta$ in Eq. \eqref{eq:constraint} and the fusion strategy in Eq. \eqref{eq:fusion}, we first expand $\Delta\theta$:
\begin{equation}
\Delta\theta = \theta_{t}^{*} - \theta_{t-1}^{*} + \theta_{t} - \theta_{p}.
\end{equation}
Substitute $\theta_{t}^{*}$ with the explicit fusion formula $\beta_{t}\theta_{p} + \beta_{t}\theta_{t-1}^{*} + \left(1 - 2\beta_{t}\right)\theta_{t}$:
\begin{equation}
\begin{aligned}
\Delta\theta &= \left[\beta_{t}\theta_{p} + \beta_{t}\theta_{t-1}^{*} + (1 - 2\beta_{t})\theta_{t}\right] - \theta_{t-1}^{*} + \theta_{t} - \theta_{p} \\
&= (\beta_{t}-1)\theta_{p} + (\beta_{t}-1)\theta_{t-1}^{*} + (1 - 2\beta_{t} + 1)\theta_{t} \\
&= (\beta_{t}-1)\theta_{p} + (\beta_{t}-1)\theta_{t-1}^{*} + (2 - 2\beta_{t})\theta_{t}.
\end{aligned}
\end{equation}
By factoring out $(\beta_{t}-1)$, we obtain:
\begin{equation}
\label{eq:delta_theta_factor}
\Delta\theta = (\beta_{t}-1)(\theta_{p} + \theta_{t-1}^{*} - 2\theta_{t}).
\end{equation}

Next, we calculate the difference between the updated global adapter and the task-specific adapter, $\theta_{t}^{*} - \theta_{t}$:
\begin{equation}
\theta_{t}^{*} - \theta_{t} = \left[\beta_{t}\theta_{p} + \beta_{t}\theta_{t-1}^{*} + (1 - 2\beta_{t})\theta_{t}\right] - \theta_{t}.
\end{equation}
Simplify the equation by grouping terms:
\begin{equation}
\begin{aligned}
\theta_{t}^{*} - \theta_{t} &= \beta_{t}\theta_{p} + \beta_{t}\theta_{t-1}^{*} + (1 - 2\beta_{t} - 1)\theta_{t} \\
&= \beta_{t}\theta_{p} + \beta_{t}\theta_{t-1}^{*} - 2\beta_{t}\theta_{t}.
\end{aligned}
\end{equation}
Factor out $\beta_{t}$:
\begin{equation}
\label{eq:diff_factor}
\theta_{t}^{*} - \theta_{t} = \beta_{t}(\theta_{p} + \theta_{t-1}^{*} - 2\theta_{t}).
\end{equation}

Finally, comparing Eq. \eqref{eq:delta_theta_factor} and Eq. \eqref{eq:diff_factor}, we substitute the common term $(\theta_{p} + \theta_{t-1}^{*} - 2\theta_{t}) = \frac{\Delta\theta}{\beta_{t}-1}$ into Eq. \eqref{eq:diff_factor}:
\begin{equation}
\theta_{t}^{*} - \theta_{t} = \beta_{t} \left( \frac{\Delta\theta}{\beta_{t}-1} \right) = \frac{\beta_{t}}{\beta_{t}-1}\Delta\theta.
\end{equation}

\section{Detailed Deduction of the Optimal Fusion Coefficient $\beta_t$}
\label{app:beta_proof}

In this section, we provide the step-by-step deduction of the optimal fusion coefficient $\beta_t$ presented in Eq. \eqref{eq:optimal_beta}. We start with the unified optimization objective defined in Eq. \eqref{eq:optimization_problem}, which aims to minimize the performance gap and parameter shift under the stability constraint.

The Lagrangian function $F$ is formulated as:
\begin{align}
    F &= \underbrace{\mathcal{L}(\theta_{t}^{*}) - \mathcal{L}(\theta_{t})}_{\text{Loss Difference}} + \underbrace{\frac{1}{2}\left(\theta_{t}^{*} - \theta_{t-1}^{*} + \theta_{t} - \theta_{p}\right)^{2}}_{\text{Parameter Shift}} \nonumber \\
    &+ \lambda\left(\Delta\theta + \theta_{t-1}^{*} + \theta_{p} - \theta_{t} - \theta_{t}^{*}\right).
\end{align}

First, we apply the second-order Taylor expansion to the loss difference and substitute the definitions derived in Appendix \ref{app:theta_diff_proof}. Specifically, we use the relationship $\theta_{t}^{*} - \theta_{t} = \frac{\beta_{t}}{\beta_{t}-1}\Delta\theta$. The Lagrangian can be rewritten in terms of $\beta_t$ and $\Delta\theta$:
\begin{align} \label{eq:lagrangian_expanded}
    F \approx & \mathcal{L}^{\prime}(\theta_{t})\left(\frac{\beta_{t}}{\beta_{t}-1}\Delta\theta\right) + \frac{\mathcal{L}^{\prime\prime}(\theta_{t})}{2}\left(\frac{\beta_{t}}{\beta_{t}-1}\Delta\theta\right)^{2} + \frac{1}{2}\Delta\theta^{2} \nonumber \\
    &+ \lambda\left[\Delta\theta + (1-\beta_{t})(\theta_{p} + \theta_{t-1}^{*} - 2\theta_{t})\right].
\end{align}

To find the optimal $\beta_t$, we calculate the partial derivatives of $F$ with respect to $\beta_t$ and $\Delta\theta$, and set them to zero.

\textbf{1. Derivative with respect to $\beta_t$:}
\begin{align} \label{eq:partial_beta_raw}
    \frac{\partial F}{\partial\beta_{t}} = &-\frac{1}{(\beta_{t}-1)^{2}}\mathcal{L}^{\prime}(\theta_{t})\Delta\theta - \frac{\beta_{t}}{(\beta_{t}-1)^{3}}\mathcal{L}^{\prime\prime}(\theta_{t})\Delta\theta^{2} \nonumber \\
    &- \lambda(\theta_{p} + \theta_{t-1}^{*} - 2\theta_{t}) = 0.
\end{align}

\textbf{2. Derivative with respect to $\Delta\theta$:}
\begin{equation} \label{eq:partial_delta}
    \frac{\partial F}{\partial \Delta\theta} = \frac{\beta_{t}}{\beta_{t}-1}\mathcal{L}^{\prime}(\theta_{t}) + \frac{\beta_{t}^{2}}{(\beta_{t}-1)^{2}}\mathcal{L}^{\prime\prime}(\theta_{t})\Delta\theta + \Delta\theta + \lambda = 0.
\end{equation}

From the constraint equation $\Delta\theta = (\beta_t - 1)(\theta_p + \theta_{t-1}^* - 2\theta_t)$, we can express the term involving parameters as:
\begin{equation} \label{eq:param_sub}
    \theta_{p} + \theta_{t-1}^{*} - 2\theta_{t} = \frac{1}{\beta_{t}-1}\Delta\theta.
\end{equation}

Now, we substitute $\lambda$ derived from Eq. \eqref{eq:partial_delta} and the relationship from Eq. \eqref{eq:param_sub} into Eq. \eqref{eq:partial_beta_raw}. This eliminates $\lambda$:
\begin{align}
    0 = &-\frac{1}{(\beta_{t}-1)^{2}}\mathcal{L}^{\prime}(\theta_{t})\Delta\theta - \frac{\beta_{t}}{(\beta_{t}-1)^{3}}\mathcal{L}^{\prime\prime}(\theta_{t})\Delta\theta^{2} \nonumber \\
    &- \left[ -\frac{\beta_{t}}{\beta_{t}-1}\mathcal{L}^{\prime}(\theta_{t}) - \frac{\beta_{t}^{2}}{(\beta_{t}-1)^{2}}\mathcal{L}^{\prime\prime}(\theta_{t})\Delta\theta - \Delta\theta \right] \left( \frac{1}{\beta_{t}-1}\Delta\theta \right).
\end{align}

Expanding and grouping the terms:
\begin{align}
    0 = &-\frac{1}{(\beta_{t}-1)^{2}}\mathcal{L}^{\prime}(\theta_{t})\Delta\theta - \frac{\beta_{t}}{(\beta_{t}-1)^{3}}\mathcal{L}^{\prime\prime}(\theta_{t})\Delta\theta^{2} \nonumber \\
    &+ \frac{\beta_{t}}{(\beta_{t}-1)^{2}}\mathcal{L}^{\prime}(\theta_{t})\Delta\theta + \frac{\beta_{t}^{2}}{(\beta_{t}-1)^{3}}\mathcal{L}^{\prime\prime}(\theta_{t})\Delta\theta^{2} + \frac{1}{\beta_{t}-1}\Delta\theta^{2}.
\end{align}

We can combine the coefficients for $\mathcal{L}^{\prime}(\theta_{t})$ and $\mathcal{L}^{\prime\prime}(\theta_{t})$:
\begin{align}
    0 = &\left[ \frac{\beta_{t}-1}{(\beta_{t}-1)^{2}} \right] \mathcal{L}^{\prime}(\theta_{t})\Delta\theta + \left[ \frac{\beta_{t}^{2}-\beta_{t}}{(\beta_{t}-1)^{3}} \right] \mathcal{L}^{\prime\prime}(\theta_{t})\Delta\theta^{2} \nonumber \\
    &+ \frac{1}{\beta_{t}-1}\Delta\theta^{2}.
\end{align}

Simplifying the fractions, we obtain:
\begin{equation}
    0 = \frac{\Delta\theta}{\beta_{t}-1} \left[ \mathcal{L}^{\prime}(\theta_{t}) + \frac{\beta_{t}}{\beta_{t}-1}\mathcal{L}^{\prime\prime}(\theta_{t})\Delta\theta + \Delta\theta \right].
\end{equation}

By observation, we can find one trivial solution where $\Delta\theta=0$. This implies $\theta_{t}^{*} - \theta_{t-1}^{*} + \theta_{t} - \theta_{p} = 0$, representing that the model has not acquired new knowledge or shifted its parameters effectively. Obviously, this is not the global optimal solution we seek. 

Furthermore, since the operational range of $\beta_t$ is constrained to $(0, 0.5)$, we have $\beta_t - 1 \neq 0$. Consequently, to find the non-trivial optimal solution, we can safely divide the equation by $\frac{\Delta\theta}{\beta_t - 1}$:
\begin{equation}
    0 = \mathcal{L}^{\prime}(\theta_{t}) + \frac{\beta_{t}}{\beta_{t}-1}\mathcal{L}^{\prime\prime}(\theta_{t})\Delta\theta + \Delta\theta.
\end{equation}

Next, we substitute $\Delta\theta$ back using the relationship $\Delta\theta = (\beta_t - 1)(\theta_p + \theta_{t-1}^* - 2\theta_t)$:
\begin{align}
    0 &= \mathcal{L}^{\prime}(\theta_{t}) + \frac{\beta_{t}}{\beta_{t}-1}\mathcal{L}^{\prime\prime}(\theta_{t})\left[(\beta_t - 1)(\theta_p + \theta_{t-1}^* - 2\theta_t)\right] \nonumber \\
    &\quad + (\beta_t - 1)(\theta_p + \theta_{t-1}^* - 2\theta_t).
\end{align}

Simplifying further:
\begin{align}
    0 &= \mathcal{L}^{\prime}(\theta_{t}) + \beta_{t}\mathcal{L}^{\prime\prime}(\theta_{t})(\theta_p + \theta_{t-1}^* - 2\theta_t) \nonumber \\
    &\quad + (\beta_t - 1)(\theta_p + \theta_{t-1}^* - 2\theta_t).
\end{align}

Rearranging terms to isolate $\beta_t$:
\begin{align}
    0 &= \mathcal{L}^{\prime}(\theta_{t}) - (\theta_p + \theta_{t-1}^* - 2\theta_t) \nonumber \\
    &\quad + \beta_{t}(\theta_p + \theta_{t-1}^* - 2\theta_t)\left[ \mathcal{L}^{\prime\prime}(\theta_{t}) + 1 \right].
\end{align}

Finally, solving for $\beta_t$, we obtain the optimal fusion coefficient:
\begin{equation}
    \beta_{t} = \frac{(\theta_{p} + \theta_{t-1}^{*} - 2\theta_{t}) - \mathcal{L}^{\prime}(\theta_{t})}{(\theta_{p} + \theta_{t-1}^{*} - 2\theta_{t})(\mathcal{L}^{\prime\prime}(\theta_{t}) + 1)}.
\end{equation}

\subsection{Verification of Constraint Satisfaction}

According to Eq. \eqref{eq:delta_theta_factor} derived in Appendix \ref{app:theta_diff_proof}, we have established the relationship:
\begin{equation}
\Delta\theta = (\beta_{t}-1)(\theta_{p} + \theta_{t-1}^{*} - 2\theta_{t}).
\end{equation}
Thus, we can verify that our proposed solution satisfies the \textit{s.t.} constraint (Eq. \eqref{eq:constraint}) through the following derivation:
\begin{align}
\Delta\theta+\theta_{t-1}^{*}+\theta_{p}-\theta_{t}-\theta_{t}^{*} &= (\beta_{t}-1)(\theta_{p}+\theta_{t-1}^{*}-2\theta_{t})+\theta_{t-1}^{*}+\theta_{p}-\theta_{t}-\theta_{t}^{*} \nonumber \\
&= \beta_{t}\theta_{p}+\beta_{t}\theta_{t-1}^{*}-2\beta_{t}\theta_{t}-\theta_{p}-\theta_{t-1}^{*}+2\theta_{t}+\theta_{t-1}^{*}+\theta_{p}-\theta_{t}-\theta_{t}^* \nonumber \\
&= \beta_{t}\theta_{p}+\beta_{t}\theta_{t-1}^{*}-2\beta_{t}\theta_{t}+\theta_{t}-\theta_{t}^{*} \nonumber \\
&= \beta_{t}\theta_{p}+\beta_{t}\theta_{t-1}^{*}-2\beta_{t}\theta_{t}+\theta_{t}-[\beta_{t}\theta_{p}+\beta_{t}\theta_{t-1}^{*}+(1-2\beta_{t})\theta_{t}] \nonumber \\
&= \beta_{t}\theta_{p}+\beta_{t}\theta_{t-1}^{*}-2\beta_{t}\theta_{t}+\theta_{t}-\beta_{t}\theta_{p}-\beta_{t}\theta_{t-1}^*-\theta_{t}+2\beta_{t}\theta_{t} \nonumber \\
&= 0.
\end{align}
This confirms that the fusion mechanism strictly adheres to the stability constraint defined in our optimization problem.

\section{Algorithm Details} \label{algorithm}

In this section, we provide the detailed algorithmic procedure of our proposed method. As outlined in Section \ref{sec:methodology}, our framework proceeds sequentially through each task $t \in \{1, \ldots, T\}$. The core mechanism involves training a disposable task-specific adapter initialized with historical knowledge, and subsequently fusing it into a persistent global adapter using a theoretically derived dynamic coefficient $\beta_t$.

The complete training process is summarized in Algorithm \ref{alg:dynamic_fusion}.

\begin{algorithm}[h]
    \caption{Dynamical Fusion with Task-Specific Adapters for CIL}
    \label{alg:dynamic_fusion}
    \textbf{KwIn: }{Stream of tasks $\{\mathcal{D}_1, \ldots, \mathcal{D}_T\}$, Pre-trained backbone $\phi$, Adapter initialization scheme $\theta_{init}$, Learning rate $\eta$, Scaling factor $\alpha$, Epochs $E$.} \\
    \textbf{KwOut: }{Final global adapter $\theta_T^*$.} \\
    \textbf{Initialize:} {Global adapter $\theta_0^* \leftarrow \theta_{init}$, Historical average $\theta_{avg} \leftarrow \theta_{init}$.} \\
    \textbf{For} {$t$ = 1, ..., $T$} \\
    {
        1. Set prior $\theta_p \leftarrow \theta_{avg}$ and initialize task-specific adapter: $\theta_t \leftarrow \theta_p$. \\
        \textbf{For} {$epoch$ = 1, ..., $E$} \\
        {
            \textbf{For }{each batch $(\mathbf{x}, y)$ in $\mathcal{D}_t$} \\
            {
                2. Compute Cross-Entropy loss $\mathcal{L}(\theta_t; \mathbf{x}, y)$. \\
                3. Update task parameters: $\theta_t \leftarrow \theta_t - \eta \nabla_{\theta_t} \mathcal{L}$. \\
            }
        }
        4. Compute first-order gradient $\mathcal{L}'(\theta_t)$ and Fisher diagonal $F(\theta_t)$ on $\mathcal{D}_t$. \\
        5. Calculate statistics: $F_{\min} \leftarrow \min(F)$, $\bar{F} \leftarrow \text{mean}(F)$. \\
        6. Calculate element-wise fusion coefficient $\beta_t$ according to Eq. \eqref{eq:final_beta}: \\
        \hspace*{1em} $\beta_t = \frac{(\bar{F} - F_{\min})[(\theta_p + \theta_{t-1}^* - 2\theta_t) - \mathcal{L}^{\prime}(\theta_t)]}{(\theta_p + \theta_{t-1}^* - 2\theta_t)[\alpha F(\theta_t) - \alpha F_{\min} + 2\bar{F} - 2F_{\min}]}$. \\
        7. Clip $\beta_t$ to valid range $[0.001, 0.499]$. \\
        8. Fuse parameters to update global adapter via Eq. \eqref{eq:fusion}: \\
        \hspace*{1em} $\theta_{t}^* \leftarrow \beta_{t}\theta_{p} + \beta_{t}\theta_{t-1}^{*} + (1 - 2\beta_{t})\theta_{t}$. \\
        9. Update historical average via Eq. \eqref{eq:running_avg}: $\theta_{avg} \leftarrow \frac{t-1}{t}\theta_{avg} + \frac{1}{t}\theta_{t}$. \\
        10. \textbf{Memory Cleanup:} Discard $\theta_t$, clear statistics, and save only $\theta_t^*$ and $\theta_{avg}$ for next task. \\
    }
    \textbf{Return} $\theta_T^*$
\end{algorithm}
\section{Extended Hyperparameter Analysis}
\label{app:sensitivity}

In the main paper, we discussed the trade-off between Stability and Plasticity with varying constraint strengths $\alpha$. Here, we provide a detailed visualization of the overall \textbf{Average Accuracy} ($\bar{\mathcal{A}}$) to further substantiate the robustness of DAF.

As illustrated in Figure~\ref{fig:appendix_alpha_acc}, increasing $\alpha$ from 1.25 to 3.0 results in negligible fluctuations in $\bar{\mathcal{A}}$, with values ranging tightly between 79.63\% and 79.42\%. This stability confirms that while $\alpha$ shifts the internal balance between remembering old tasks and learning new ones, the aggregate performance of the global adapter remains remarkably consistent. Consequently, DAF demonstrates strong resilience to hyperparameter variations, alleviating the need for meticulous tuning in practical deployments.

\begin{figure*}[t!]
    \centering
    
    \begin{subfigure}[b]{0.48\textwidth}
        \centering
        \includegraphics[width=\linewidth]{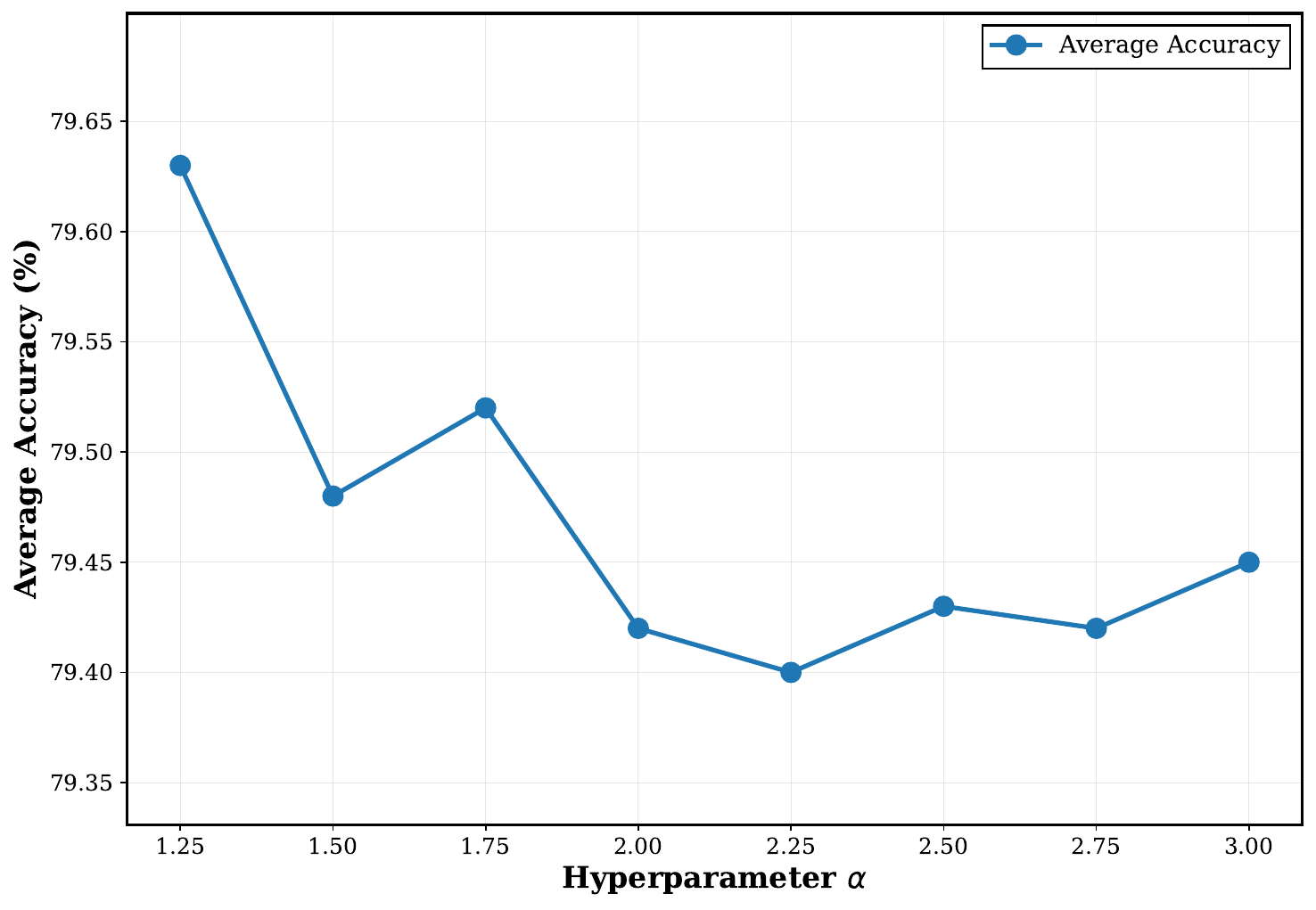} 
        \caption{Robustness to Hyperparameter $\alpha$}
        \label{fig:appendix_alpha_acc}
    \end{subfigure}
    \hfill 
    \begin{subfigure}[b]{0.48\textwidth}
        \centering
        \includegraphics[width=\linewidth]{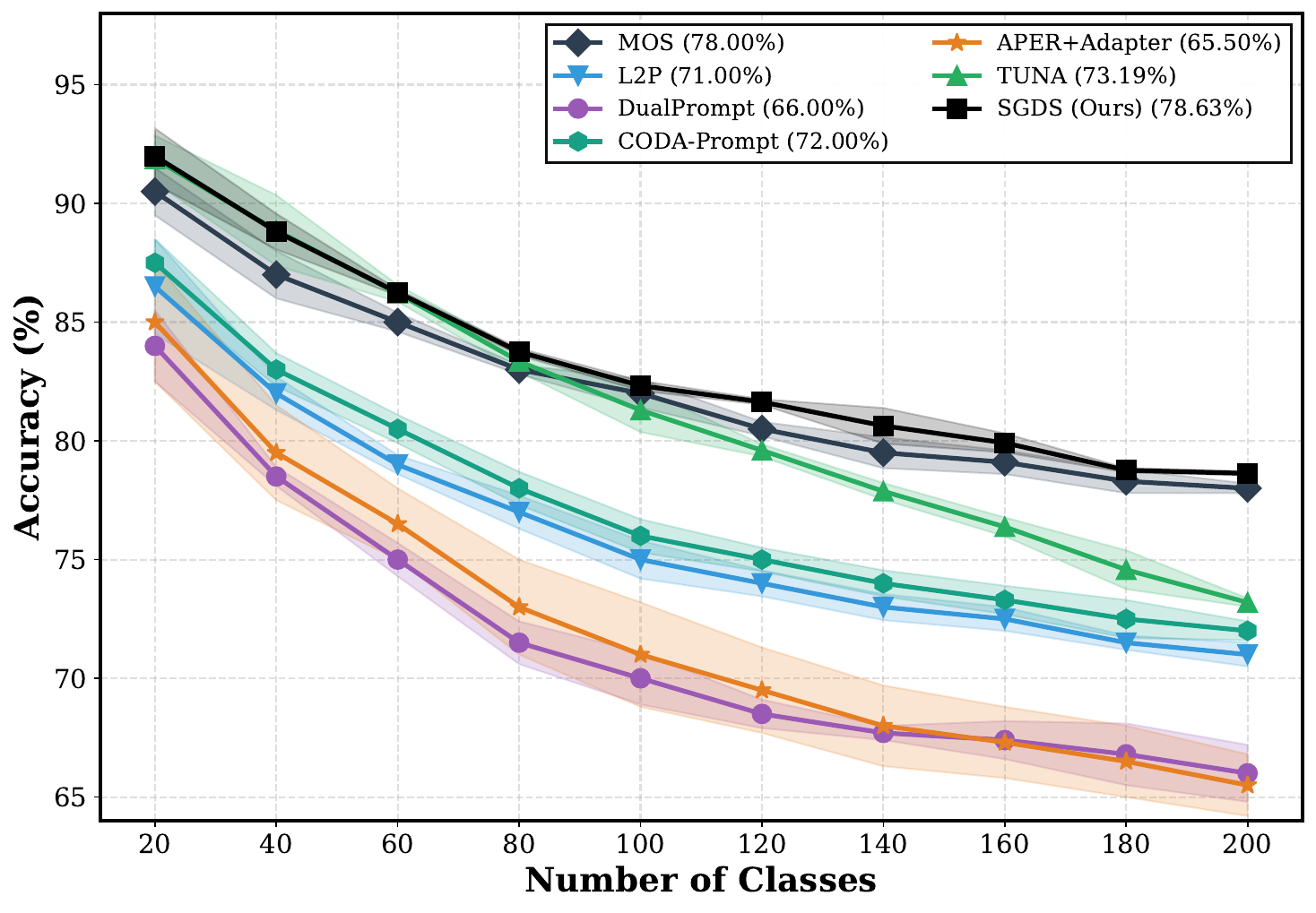} 
        \caption{Stability Across Random Seeds}
        \label{fig:multiple_runs}
    \end{subfigure}

    \caption{
        Extended stability analysis of DAF on ImageNet-R.
        \textbf{(a)} Evaluation of the Final Average Accuracy ($\bar{\mathcal{A}}$) as the constraint strength $\alpha$ varies from 1.25 to 3.0. The performance remains remarkably stable, validating that DAF is insensitive to precise hyperparameter tuning.
        \textbf{(b)} Mean Average Accuracy curves over five different random seeds. DAF consistently outperforms SOTA methods, demonstrating strong robustness to variations in class order.
    }
    \label{fig:appendix_combined_analysis}
\end{figure*}
\section{Performance Stability Across Multiple Runs}
\label{appendix:multiple_runs}

While our main experiments adhere to the standard protocol of~\cite{rebuffi2017icarl} using a fixed seed (1993), real-world scenarios often present unpredictable task sequences. To rigorously verify the consistency of DAF, we extend our evaluation on ImageNet-R by conducting five independent runs with distinct random seeds: \{1993, 1994, 1995, 1996, 1997\}. Each seed generates a unique permutation of class orders, creating diverse task compositions.

The averaged performance across these runs is presented in Figure~\ref{fig:multiple_runs}. The results demonstrate that DAF maintains a consistent and significant advantage over all competing baselines regardless of the task sequence. This stability confirms that our dynamical fusion mechanism is robust to variations in the incremental learning curriculum and does not rely on specific favorable class orderings.

\section{Enhancing DAF with Task-Specific Retrieval}
\label{appendix:retrieval_enhancement}

While DAF is designed as an efficient, retrieval-free framework utilizing a Single Global Adapter, its flexible architecture allows it to be integrated with task-aware mechanisms.
Although we evaluate TUNA using only its global adapter in the main paper, the original TUNA~\cite{wang2025integrating} also introduces a combined inference strategy.
To fully evaluate extensibility, we adopt this strategy: retrieving the most relevant task-specific adapter via an entropy-based selector and combining its prediction with the global adapter.
We compare four distinct variants: standard \textbf{DAF} and \textbf{TUNA} (Global adapter only), alongside their retrieval-enhanced counterparts, \textbf{DAF (w/ Retrieval)} and \textbf{TUNA (w/ Retrieval)}.

As illustrated in Figure~\ref{fig:appendix_curves}, incorporating a retrieval module consistently yields significant performance improvements for both frameworks across all benchmarks. Specifically, \textbf{DAF (w/ Retrieval)} achieves the highest accuracy, outperforming \textbf{TUNA (w/ Retrieval)} by a clear margin. This confirms that our dynamically fused global adapter serves as a superior foundation for ensemble-based inference compared to TUNA.

However, it is crucial to acknowledge the cost of this improvement. While retrieval mechanisms effectively boost accuracy, they introduce substantial computational overhead during inference, as the model must search through a growing pool of adapters for every input sample. Consequently, although \textbf{DAF (w/ Retrieval)} offers the best absolute performance, the standard \textbf{DAF} remains the optimal choice for real-world applications where inference speed and memory efficiency are paramount.

\begin{figure*}[t!]
    \centering
    \begin{subfigure}[b]{0.48\textwidth}
        \centering
        \includegraphics[width=\textwidth]{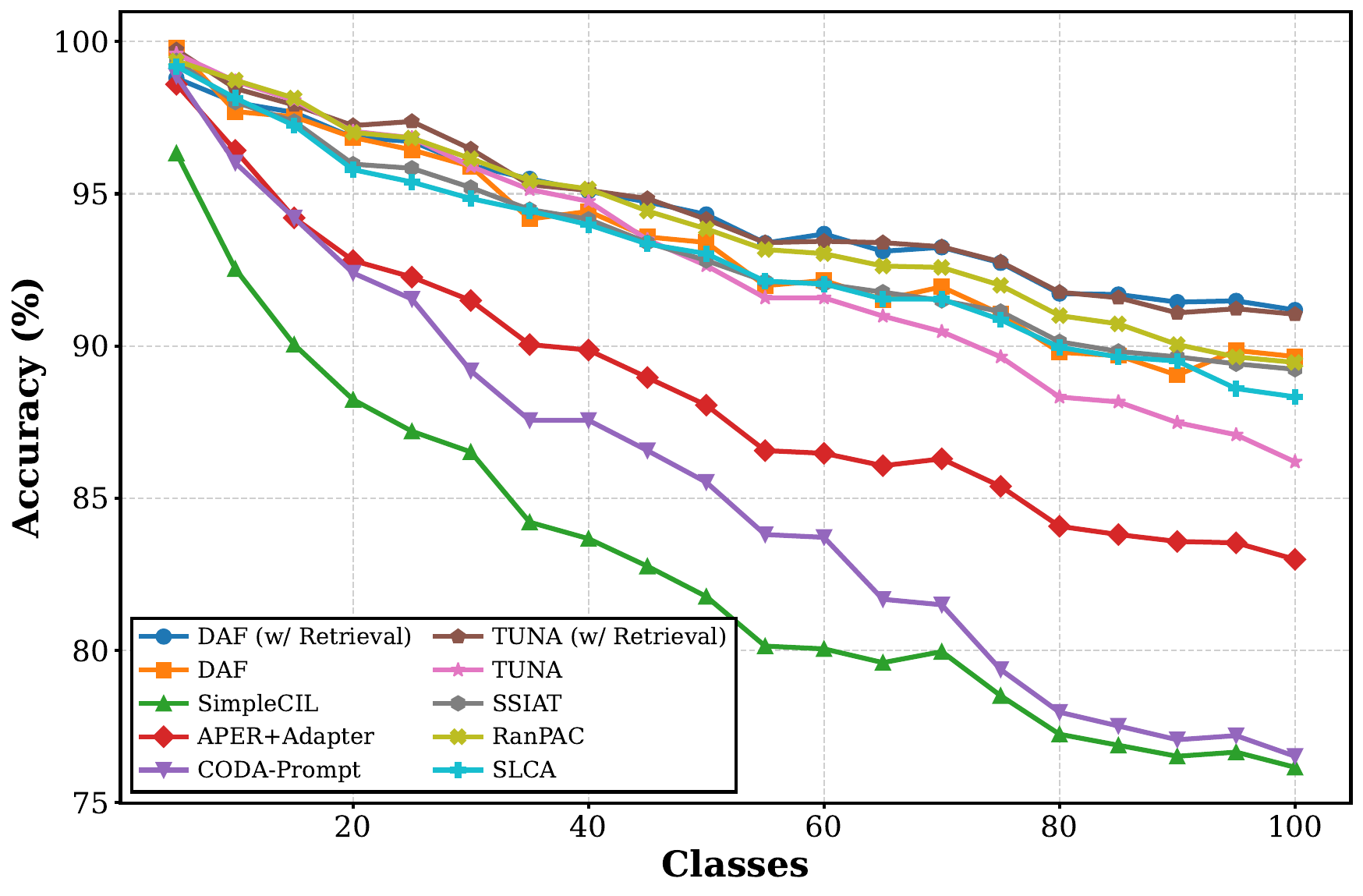}
        \caption{CIFAR-100}
        \label{subfig:curve_cifar100_1}
    \end{subfigure}
    \hfill
    \begin{subfigure}[b]{0.48\textwidth}
        \centering
        \includegraphics[width=\textwidth]{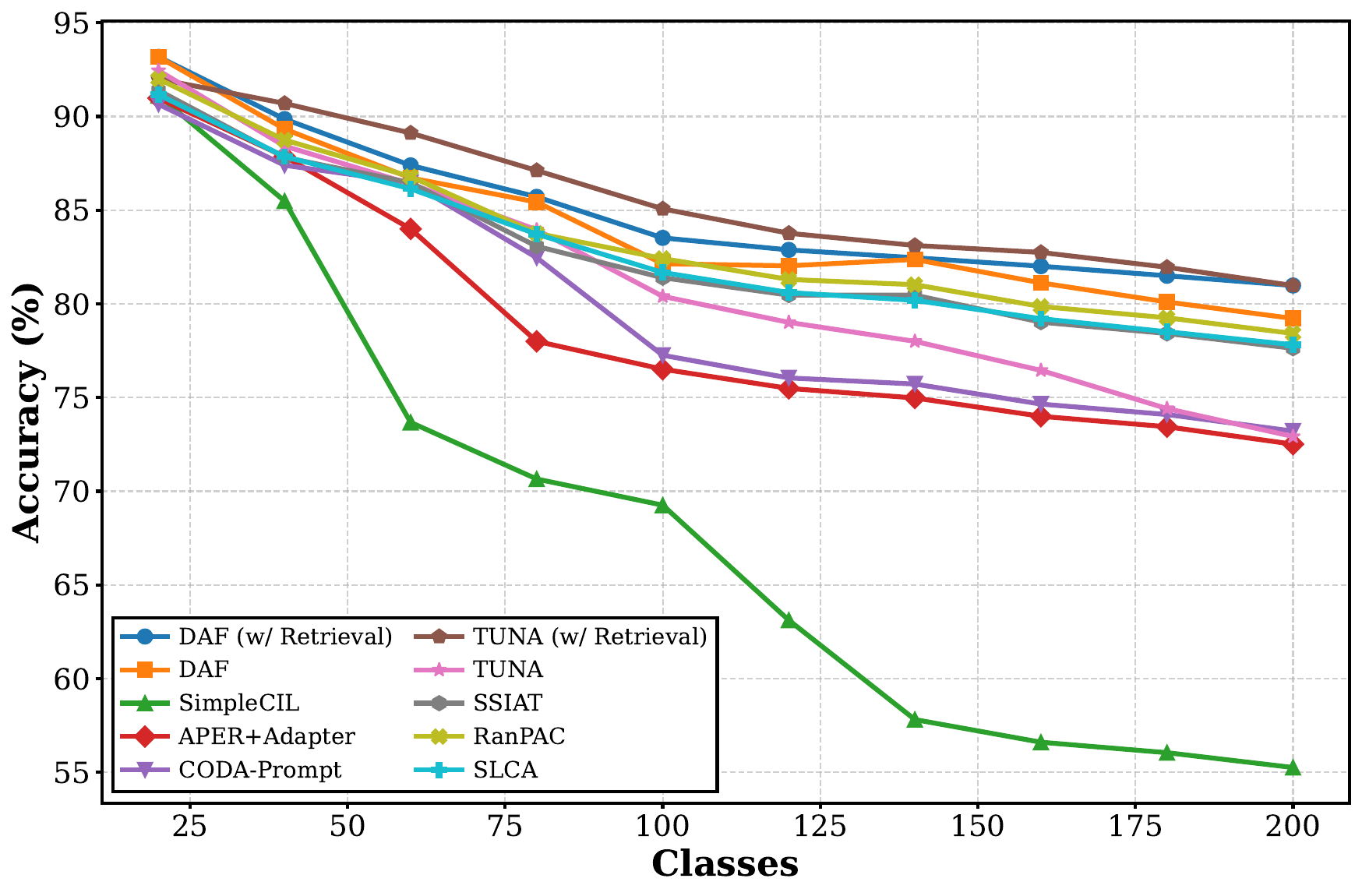}
        \caption{ImageNet-R}
        \label{subfig:curve_imagenetr_1}
    \end{subfigure}

    \vspace{0.5cm} 

    \begin{subfigure}[b]{0.48\textwidth}
        \centering
        \includegraphics[width=\textwidth]{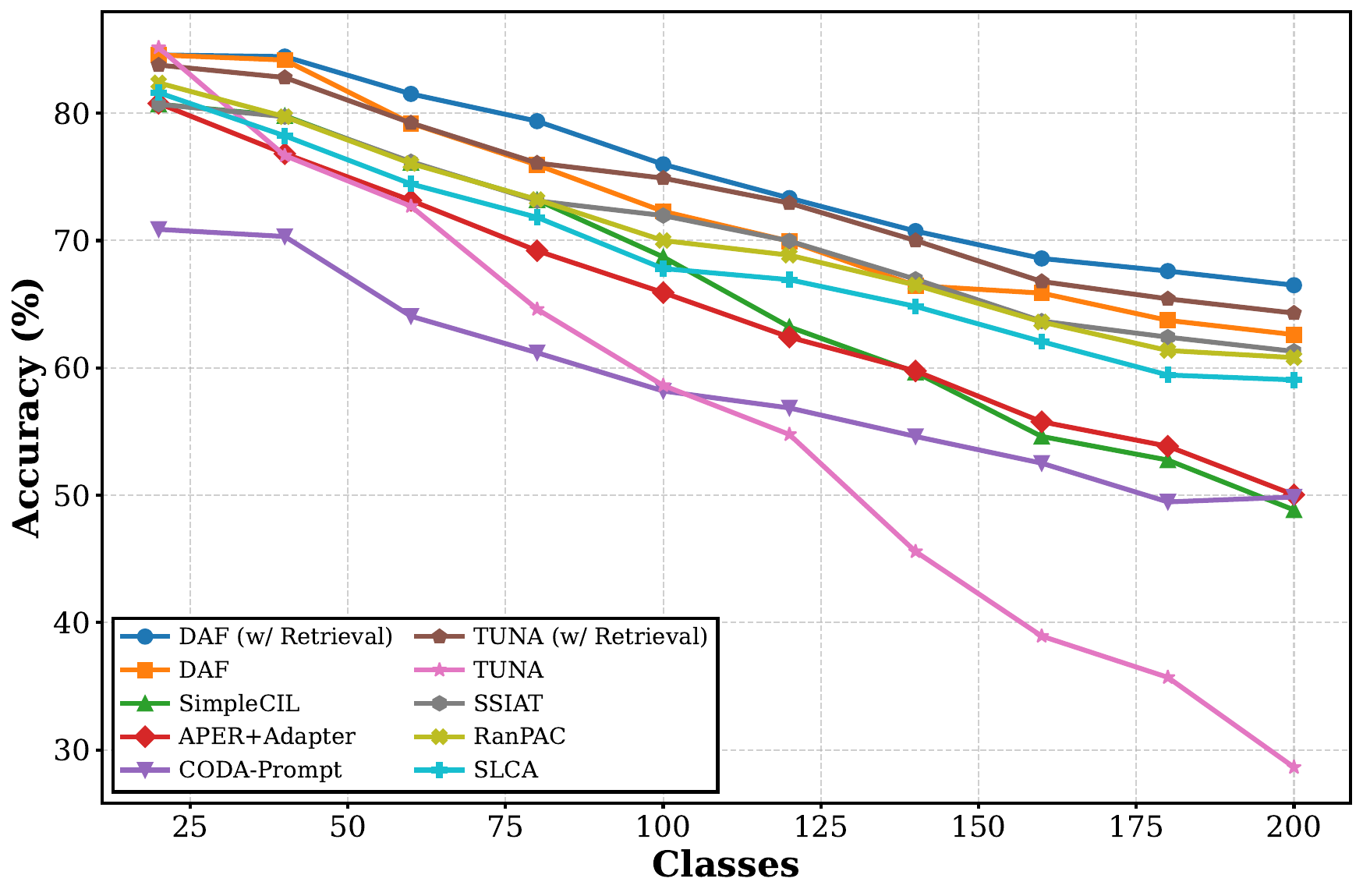}
        \caption{ImageNet-A}
        \label{subfig:curve_imageneta_1}
    \end{subfigure}
    \hfill
    \begin{subfigure}[b]{0.48\textwidth}
        \centering
        \includegraphics[width=\textwidth]{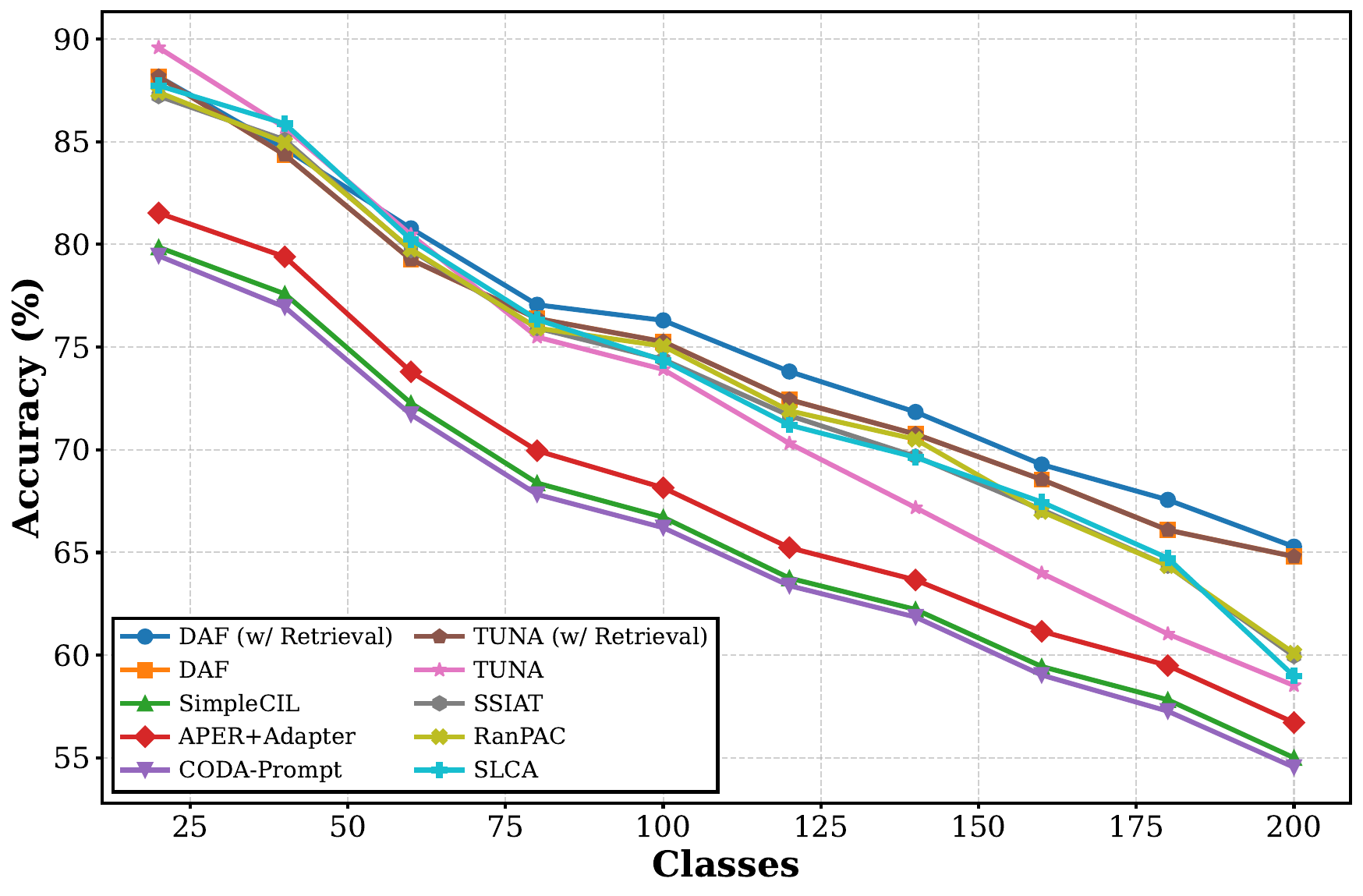}
        \caption{ObjectNet}
        \label{subfig:curve_objectnet_1}
    \end{subfigure}

    \caption{
        \textbf{Impact of Task-Specific Retrieval on DAF and TUNA.}
        We compare four variants: DAF and TUNA with and without retrieval modules.
        While integrating retrieval mechanisms significantly boosts accuracy for both methods, it comes at the expense of high inference latency.
        Notably, \textbf{DAF (w/ Retrieval)} consistently outperforms \textbf{TUNA (w/ Retrieval)}, demonstrating the superiority of our underlying fusion mechanism.
    }
    \label{fig:appendix_curves}
\end{figure*}
\end{document}